\theoremstyle{plain}
\newtheorem{theorem}{Theorem}[section]
\newtheorem{proposition}[theorem]{Proposition}
\newtheorem{lemma}[theorem]{Lemma}
\newtheorem{corollary}[theorem]{Corollary}
\theoremstyle{definition}
\theoremstyle{remark}
\newtheorem{remark}[theorem]{Remark}
\newcommand{\R}{\mathbb{R}}
\newcommand{\E}{\mathbbm{E}}
\renewcommand{\P}{\mathbbm{P}}
\renewcommand{\S}{\mathcal{S}}
\newcommand{\A}{\mathcal{A}}
\icmltitlerunning{Reinforcement Learning with Random Time Horizons}
\begin{document}

\twocolumn[
\icmltitle{Reinforcement Learning with Random Time Horizons}

\icmlsetsymbol{equal}{*}

\begin{icmlauthorlist}
\icmlauthor{Enric Ribera Borrell}{equal,zib,fub}
\icmlauthor{Lorenz Richter}{equal,zib,dida}
\icmlauthor{Christof Schütte}{zib,fub}
\end{icmlauthorlist}
\icmlaffiliation{zib}{Zuse Institute Berlin, 14195 Berlin, Germany}
\icmlaffiliation{fub}{Institute of Mathematics, Free University Berlin, 14195 Berlin, Germany}
\icmlaffiliation{dida}{dida Datenschmiede GmbH, 10827 Berlin, Germany}

\icmlcorrespondingauthor{Enric Ribera Borrell}{ribera.borrell@zib.de}
\icmlcorrespondingauthor{Lorenz Richter}{richter@zib.de}

\icmlkeywords{Machine Learning, ICML}

\vskip 0.3in
]

\printAffiliationsAndNotice{\icmlEqualContribution}

\begin{abstract}
We extend the standard reinforcement learning framework to random time horizons. While the classical setting typically assumes finite and deterministic or infinite runtimes of trajectories, we argue that multiple real-world applications naturally exhibit random (potentially trajectory-dependent) stopping times. Since those stopping times typically depend on the policy, their randomness has an effect on policy gradient formulas, which we (mostly for the first time) derive rigorously in this work both for stochastic and deterministic policies. We present two complementary perspectives, trajectory or state-space based, and establish connections to optimal control theory. Our numerical experiments demonstrate that using the proposed formulas can significantly improve optimization convergence compared to traditional approaches.
\end{abstract}

%\icmlkeywords{reinforcement learning, policy gradient theorem, random stopping times, optimal control}

\section{Introduction}

The goal in reinforcement learning is to identify policies $\pi$ that maximize the expected return
\begin{equation}
    J(\pi) = \E_{\pi}\left[\sum_{n=0}^{N} r_n(S_n, A_n) \right],
\end{equation}
where the states $S_n$ and the actions $A_n$ run until a time $N$. While the classical framework considers the case where $N$ is either finite and deterministic or infinite, runtimes can as well be random. To name two examples, one could consider robots that aim to reach a target in a noisy environment or one could play a game whose terminating state depends on reaching a certain level. Furthermore, considering stochastic policies (as commonly done in practice in order to balance exploration and exploitation) leads to the fact that runtimes are random even in deterministic environments. In all those cases the termination depends on previous (random) choices and is therefore random itself. However, the common literature has so far largely ignored the fact that runtimes can be random. In particular, policy gradient theorems typically assume that they are either deterministic or infinite \cite{sutton2018reinforcement}. Nonetheless, it is intuitively evident that considering random times should have an effect on optimization formulas since the runtimes mostly depend on the current policy such that in principle one should be careful with taking gradients (cf. \citet{nota2019policy}). 

In this work, we close this theoretical gap and systematically investigate the effect of random time horizons in reinforcement learning problems. Our contributions can be summarized as follows:
    \begin{itemize}
        \item We extend the typical reinforcement learning framework to random time horizons, including (deterministic) finite and infinite runtimes as special cases.
        \item We state corresponding (random time) policy gradient theorems both for deterministic and stochastic policies.
        \item This allows us for the first time to rigorously derive a trajectory-based gradient estimator for random time horizons (for stochastic as well as deterministic policies) in discrete time.
        \item For deterministic policies we derive a novel model-based policy gradient formula, which does not rely on learning the $Q$-value function as in actor-critic approaches.
        \item In multiple  numerical experiments, we systematically investigate the effect of incorporating the randomness of the time horizon in the gradient computation. For most cases we can see significant performance improvements of our gradient formulas compared to the standard ones, in particular in terms of convergence speed.
    \end{itemize}

\subsection{Related work}

Gradient based on-policy optimization in reinforcement learning has been pioneered by \citet{sutton1999policy}, where a policy gradient theorem for infinite time horizons and discrete state spaces has been suggested. In \citet{silver2014deterministic} it has been extended to deterministic policies and continuous state-spaces, still in infinite time. In the seminal monograph from \citet{sutton2018reinforcement} random runtimes are mentioned, however, not studied. To the best of our knowledge, the only work that studies trajectory-dependent random runtimes is from \citet{bojun2020steady}. However, it solely focuses on discrete state and action spaces and considers only stochastic policies, mostly relying on Markov chain theory for proving the statements. Our work, in contrast, operates in continuous spaces and adds formulas for deterministic policies. For further work on random time horizons in reinforcement learning, we refer to \citet{mandal_online_2023} and \citet{chen2024finite}, who, however, only study trajectory-independent stopping times.

We further refer to \citet{nota2019policy}, who show that taking heuristic versions of policy gradients (e.g. dropping discount factors without motivation) might lead to wrong expressions and that in general one should be careful with employing algorithms that are not backed by theory. Also, we refer to \citet{white_unifying_2017}, who unifies finite and infinite time horizons by introducing generalized (transition-based) discount factors, which might in principle be enhanced to random time problems.

Finally, we want to mention that random time horizons are more popular in stochastic optimal control settings, which consider deterministic policies and are typically stated in continuous time \cite{pham2009continuous}. We want to highlight \citet{ribera2024improving}, who derive trajectory-based gradient estimators for random time optimal control problems. In fact, for discrete times, we can generalize their result to non-Gaussian transition densities. We refer to \citet{lie2021fr}, who analyzes convexity properties of cost functionals with random stopping times, and to \citet{zhou2021actor}, who suggest an algorithm valid for random stopping times in optimal control that is inspired by the actor-critic method. Lastly, we refer to \citet{schutte2023overcoming} for random stopping time problems and related optimization methods in molecular dynamics and to \citet{quer_connecting_2024} who approach optimal control problems appearing in molecular dynamics applications with reinforcement learning techniques.

\subsection{Outline of the article}

In \Cref{sec: reinforcement learning with random time horizons} we formally state the reinforcement learning setting allowing for random runtimes. \Cref{sec: trajectory vs. state-space perspective} comments on taking either trajectory or state-space perspectives on this setting. Using both perspectives, we then derive policy gradient theorems incorporating random time horizons in \Cref{sec: optimization and policy gradient theorems}. In \Cref{sec: numerical experiments} we compare our novel gradient formulas to classical formulas in numerical experiments. Finally, \Cref{sec: conclusion} concludes and provides an outlook for future research.

\section{Reinforcement learning with random time horizons}
\label{sec: reinforcement learning with random time horizons}

In the following we will formally introduce the reinforcement learning setting we consider in this article. Further details on the notation can be found in \Cref{app: notation}. We consider the continuous state-space $\mathcal{S} \subset \mathbb{R}^{d_s}$ and the continuous action space $\mathcal{A} \subset \mathbb{R}^{d_a}$ and define a time-discrete \textit{Markov decision process} $\mathcal{M} = (\mathcal{S}, \mathcal{A}, r_n, \rho_0, p)$ on the probability space $(\Omega, \mathcal{F}, \mathbb{P})$. Here the reward function $r_n: \mathcal{S} \times \mathcal{A} \rightarrow \mathbb{R}$ provides the signal that is received after being in state $S_n \in \mathcal{S}$ and having taken action $A_n \in \mathcal{A}$ at time\footnote{In principle, the reward function can be explicitly time-dependent, however, for random time horizons one typically chooses it to be time-independent.} $n \in \mathbb{N}$. The function $\rho_0: \mathcal{S} \rightarrow \mathbb{R}_{\ge 0}$ represents the initial probability density of the states and $p: \mathcal{S} \times \mathcal{S} \times \mathcal{A} \rightarrow \mathbb{R}_{\ge 0}$ is the time-homogeneous (state-action) transition probability density. To be more precise, let $\Lambda \subset \mathcal{S}$, then the probability of starting in a set $\Lambda$ is given by
\begin{equation}
\label{eq: initial density}
\mathbb{P}(S_0 \in \Lambda) = \int_{\Lambda} \rho_0(s)\mathrm{d} s
\end{equation}
and the probability of transitioning into the set $\Lambda$ conditioned on being in state $s \in \mathcal{S}$ and having chosen action $a \in \mathcal{A}$ at time $n \in \mathbb{N}$ is given by
\begin{equation}
\label{eq: state-action transition density}
\mathbb{P}(S_{n+1} \in \Lambda \mid S_n=s, \, A_n=a ) = \int_{\Lambda} p(s', s, a) \mathrm{d} s'.
\end{equation}

A \emph{policy} is a decision rule determining which action $A_n \in \mathcal{A}$ to take when being in state $S_n \in \mathcal{S}$ at time $n \in \mathbb{N}$. In this work, we consider stationary Markovian policies, i.e., for every time step the decision rule is the same and the choice of the action at time $n$ does not depend on the previous states $S_0, \dots, S_{n-1}$ of the trajectory. We distinguish between two types of policies:
\begin{itemize}
\item A \emph{deterministic policy} $\mu: \mathcal{S} \rightarrow \mathcal{A}$ is a function that directly maps each state to an action.
\item A \emph{stochastic policy} $\pi: \mathcal{S} \times \mathcal{A} \rightarrow \mathbb{R}_{\ge 0}$ is the probability density of taking an action in a given state. To be more precise, let $M \subset \mathcal{A}$, then the probability to choose an action in $M$ when being in state $s \in \mathcal{S}$ is given by
\begin{equation}
\label{eq: stat stoch policy}
\mathbb{P}(A_n \in M \mid S_n = s) = \int_M \pi(s, a) \mathrm{d}a .
\end{equation}
\end{itemize}

\begin{remark}[Deterministic as stochastic policy]
\label{rem: deterministic as stochastic policy}
Formally, a deterministic policy $\mu$ can be expressed as a stochastic policy via $\pi(s, a) = \delta(\mu(s) - a)$, where $\delta$ is the Dirac delta distribution. In the sequel, we will therefore often only refer to the stochastic policy $\pi$ and only mention the deterministic policy $\mu$ explicitly when necessary.
\end{remark}

The goal in reinforcement learning is to identify policies $\pi$ that maximize the expected return (i.e. cumulative reward)
\begin{equation}
\label{eq: expected return}
    J(\pi) = \E_{\pi}\left[\sum_{n=0}^{N} r_n(S_n, A_n) \right].
\end{equation}

We note that the randomness of the return depends on the policy $\pi$ and the densities $\rho_0, p$ defined before. For notational convenience, only the learnable $\pi$ is included in the subscript of the expectation operator (see \Cref{app: notation} for more details). Typically, one either considers $N$ to be fixed and deterministic\footnote{Markov decision processes with finite runtimes are often called \textit{episodic}, whereas infinite time processes are called \textit{continuing}. For the episodic case, some works include random, wheres some assume deterministic runtimes. However, computational consequences for random runtimes are typically not studied.} or $N=\infty$. In the latter case one has to discount the reward function, e.g. via $r_n(s, a) = \gamma^n r(s, a)$, where $\gamma \in (0, 1)$ is a discount factor and $r : \S \times \A \to \R$ is not explicitly time-dependent. In this paper we generalize this setting to time horizons $N$ that can be random. In practice, $N$ often depends on the (random) trajectory and is defined as the first time reaching a predefined set of terminal states $\mathcal{T} \subset \S$, i.e.
\begin{equation}
    N := \min \{ n \in \mathbbm{N} : S_n \in \mathcal{T} \}.
\end{equation}
We note that, due to the dependency on the trajectory, $N$ also depends on the policy $\pi$. Throughout this paper, we assume that $N$ is almost surely finite, i.e. $\P(N < \infty) = 1$.

Interestingly, as noted already in \citet{derman1970finite} (see also Chapter 5.3 in \citet{puterman2014markov}), the discounted infinite horizon case can be considered as a random time horizon problem\footnote{We note, however, that due to the time-dependent reward function, the starting time matters for infinite time horizon problems in this setting.}. For convenience, we state the proof in \Cref{app: proofs}.

\begin{lemma}[Random time perspective on discounted infinite time problem]
\label{lem: discounted infinite as random time problem}
    Let $N_\gamma$ be a geometrically distributed random variable with success probability $1 - \gamma$, where $\gamma \in (0, 1)$. Then, choosing  $N = N_\gamma$ and $r_n = r$ in objective \eqref{eq: expected return} is equivalent to choosing $N = \infty$ and $r_n = \gamma^n r$.
\end{lemma}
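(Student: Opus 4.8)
The plan is to compare the two objectives by writing out the expected return in each case and using independence of $N_\gamma$ from the trajectory. Concretely, let $(S_n, A_n)_{n \ge 0}$ denote the trajectory generated by the policy $\pi$ together with $\rho_0$ and $p$; crucially, in the random-time formulation the random variable $N_\gamma$ is drawn \emph{independently} of this trajectory (it is an external geometric clock, not the hitting time of a terminal set). I would first rewrite the finite-but-random sum as an infinite sum with an indicator: $\sum_{n=0}^{N_\gamma} r(S_n, A_n) = \sum_{n=0}^{\infty} \mathbbm{1}_{\{n \le N_\gamma\}}\, r(S_n, A_n)$.

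Next I would take the expectation over everything and use the tower property, conditioning on the trajectory and integrating out $N_\gamma$ first. By independence, $\E[\mathbbm{1}_{\{n \le N_\gamma\}} \mid (S_k,A_k)_k] = \P(N_\gamma \ge n)$, and for a geometric random variable with success probability $1-\gamma$ (supported on $\{0,1,2,\dots\}$) one has $\P(N_\gamma \ge n) = \gamma^n$. Substituting this back gives
\begin{equation}
\E_\pi\left[\sum_{n=0}^{N_\gamma} r(S_n, A_n)\right] = \E_\pi\left[\sum_{n=0}^{\infty} \gamma^n r(S_n, A_n)\right],
\end{equation}
which is exactly the objective \eqref{eq: expected return} with $N = \infty$ and $r_n = \gamma^n r$. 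To justify the interchange of expectation and the infinite sum (Fubini/Tonelli), I would invoke absolute integrability, which follows if $r$ is, say, bounded (or has suitable integrability), since then $\sum_n \gamma^n |r(S_n,A_n)|$ has finite expectation; this is the only real technical point and is a mild regularity assumption of the kind implicitly used throughout the discounted setting.

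The main obstacle — more a matter of care than of difficulty — is being precise about the convention for the geometric distribution (support starting at $0$ versus $1$), so that $\P(N_\gamma \ge n) = \gamma^n$ comes out with the correct exponent and the boundary term $n=0$ is handled correctly; a shift by one would produce an extra factor of $\gamma$ or $\gamma^{-1}$. A secondary point worth a sentence is emphasizing that this equivalence holds precisely because the geometric clock is independent of the dynamics, unlike the trajectory-dependent hitting times $N = \min\{n : S_n \in \mathcal{T}\}$ that are the main focus of the paper; this is why the lemma is an illustrative embedding of the classical discounted case rather than an instance of the harder random-time analysis. I would close by noting the footnoted caveat that, because $r_n = \gamma^n r$ is genuinely time-dependent, the starting time index matters, so the equivalence is stated for trajectories initialized at $n=0$.
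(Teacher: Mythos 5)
Your proof is correct and takes essentially the same approach as the paper: both arguments rest on the independence of the geometric clock $N_\gamma$ from the trajectory and a Fubini-type interchange, with your version conditioning on the trajectory and using the survival function $\P(N_\gamma \ge n) = \gamma^n$, while the paper conditions on $N_\gamma$ first and then reorders the resulting double sum explicitly. Your handling of the support convention ($\P(N_\gamma = m) = \gamma^m(1-\gamma)$ on $\{0,1,2,\dots\}$) matches the paper's, so the exponent comes out right.
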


A notable conclusion is that infinite time horizon problems can in fact be seen as problems that are finite with probability one. To be precise, they can be interpreted as random time horizon problems where the trajectory can be arbitrarily long, however, (almost surely) not infinitely long. We note, however, that the equivalence only holds in expectation and numerical implications might have to be studied further, cf. \citet{paternain2018stochastic}, \citet{zhang2020global} and \citet{mandal_online_2023}.

\subsection{Trajectory vs. state-space perspective}
\label{sec: trajectory vs. state-space perspective}

As commonly done, let us in the following assume the reward function to be not explicitly time-dependent, i.e. we set $r_n = r$. For the reinforcement learning problem defined above, we can consider two complementary perspectives, leading to different formulas and different implementations. We can either average the reward function \eqref{eq: expected return} over trajectories, such that the expectation is taken w.r.t. \textit{discrete path measures}, or we can view the problem on a state-space level, not inherently incorporating any dynamics. Let us elaborate in the following.

\textbf{Trajectory perspective.} This viewpoint is the one described above, i.e., we consider trajectories $(S_n, A_n)_{n=0}^N$ that evolve over time, given transition densities for the states and policies for the actions.

\textbf{State-space perspective.} In this viewpoint we define the state-space density $\rho^\pi$ (sometimes also called \textit{on-policy distribution}) as the density of the process $S$ under policy $\pi$. Recall that the density is independent of time since we consider stationary problems. It can be defined as follows. First, we define the function $\rho^\pi_n$ s.t. for all $\Lambda \subset \mathcal{S}$ it holds
\begin{equation}
\label{def: n-step function}
    \int_\Lambda \rho^\pi_n(s) \mathrm ds = \P_\pi(S_n \in \Lambda),
\end{equation}
so it measures the probability of being in a state at a given time\footnote{We assume that the process $S$ stops after $N$ steps.} (to be even more precise, the above probability means $\P_\pi(S_n \in \Lambda) = \P_\pi(S_n \in \Lambda, n \le N)$). We can now define the state-space density as the probability of being in a certain state irrespective of the time via
\begin{equation}
\label{eq: definition state-space density}
\rho^\pi := \frac{\eta^\pi}{Z^\pi}, \quad \eta^\pi := \sum_{n = 0}^\infty \rho^\pi_n, \quad Z^\pi := \int_{\mathcal{S}} \eta^\pi(s) \mathrm ds.
\end{equation}

In other words, $\rho^\pi$ measures the frequency of visiting a certain region at some point during the evolution of the trajectory and thus does not contain time information anymore. To make this observation more precise, the following lemma shows that the (unnormalized) function $\eta^\pi$ defined in \eqref{eq: definition state-space density} can be interpreted as counting how often trajectories stay in certain regions of the space, see \Cref{app: proofs} for the proof.

\begin{lemma}
\label{lem: eta region counting}
For a set $\Lambda \subset \mathcal{S}$ it holds
\begin{equation}
\int_\Lambda \eta^\pi(s)\mathrm ds = \E_\pi\left[\sum_{n=0}^N \mathbbm{1}_\Lambda(S_n) \right]
\end{equation}
and in particular
\begin{equation}
Z^\pi = \int_S \eta^\pi(s)\mathrm ds = \E_\pi\left[N +1 \right].
\end{equation}
\end{lemma}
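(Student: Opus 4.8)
The plan is to unfold the definition of $\eta^\pi$ as the infinite sum of the $n$-step functions $\rho^\pi_n$ and to interpret each $\rho^\pi_n$ probabilistically via \eqref{def: n-step function}, keeping in mind the crucial caveat noted in the excerpt that $\P_\pi(S_n \in \Lambda)$ really means $\P_\pi(S_n \in \Lambda,\, n \le N)$, i.e. the process is frozen/absorbed after the random time $N$. Concretely, for a fixed set $\Lambda \subset \mathcal{S}$ I would write
\begin{equation}
\int_\Lambda \eta^\pi(s)\,\mathrm ds = \sum_{n=0}^\infty \int_\Lambda \rho^\pi_n(s)\,\mathrm ds = \sum_{n=0}^\infty \P_\pi(S_n \in \Lambda,\, n \le N) = \sum_{n=0}^\infty \E_\pi\!\left[\mathbbm{1}_\Lambda(S_n)\,\mathbbm{1}_{\{n \le N\}}\right].
\end{equation}
The next step is to exchange the (countable) sum and the expectation, which is justified by the monotone convergence theorem (or Tonelli) since every summand is nonnegative. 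This gives $\int_\Lambda \eta^\pi(s)\,\mathrm ds = \E_\pi\big[\sum_{n=0}^\infty \mathbbm{1}_\Lambda(S_n)\,\mathbbm{1}_{\{n \le N\}}\big] = \E_\pi\big[\sum_{n=0}^N \mathbbm{1}_\Lambda(S_n)\big]$, where in the last equality I use that $\mathbbm{1}_{\{n \le N\}} = 1$ exactly for $n \in \{0,\dots,N\}$; here the almost-sure finiteness of $N$ (assumed in the excerpt) ensures the upper limit $N$ is finite $\P_\pi$-a.s., so the inner sum is a genuine finite sum pathwise.

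For the second claim, I would simply take $\Lambda = \mathcal{S}$ in the identity just proved. Then $\mathbbm{1}_{\mathcal{S}}(S_n) = 1$ for all $n$ (each $S_n$ lives in $\mathcal{S}$), so $\sum_{n=0}^N \mathbbm{1}_{\mathcal{S}}(S_n) = N+1$, and therefore $Z^\pi = \int_{\mathcal{S}} \eta^\pi(s)\,\mathrm ds = \E_\pi[N+1]$, as claimed. (Implicitly this also shows $Z^\pi < \infty$ iff $\E_\pi[N] < \infty$, which one may want to note as a mild standing assumption, though the statement as written does not require finiteness.)

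The only genuine subtlety — the "hard part" — is being careful about the absorption convention: one must make sure that the $n$-step density $\rho^\pi_n$ is the one associated with the stopped process, so that $\int_\Lambda \rho^\pi_n = \P_\pi(S_n \in \Lambda,\, n\le N)$ rather than $\P_\pi(S_n \in \Lambda)$ for an imagined non-stopped continuation. This is exactly the parenthetical clarification already given after \eqref{def: n-step function}, so it is really a matter of bookkeeping rather than a deep issue; once that convention is fixed, the proof is just the monotone-convergence interchange of sum and expectation together with evaluating the indicators. I would also remark that the interchange requires no integrability hypothesis precisely because all terms are nonnegative, which is why the result holds even when $\E_\pi[N] = \infty$ (both sides are then $+\infty$).
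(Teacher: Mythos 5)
Your proof is correct and follows essentially the same route as the paper's: unfold $\eta^\pi = \sum_n \rho^\pi_n$, rewrite each term as $\E_\pi[\mathbbm{1}_\Lambda(S_n)]$ under the absorption convention, swap sum and expectation, and collapse the infinite sum to one up to $N$; the second claim then follows by taking $\Lambda = \mathcal{S}$. Your explicit invocation of monotone convergence and the careful handling of the $\{n \le N\}$ indicator only make explicit what the paper leaves implicit.
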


The state-space perspective is particularly prominent in reinforcement learning since it readily allows for off-policy learning strategies, see, e.g., \citet{degris2012off, mnih2013playing, lillicrap2015continuous}. We refer to \citet{agarwal_reinforcement_2022} for additional context on the state-space perspective in terms of so-called occupancy measures and refer to \Cref{fig: trajectory vs. state-space perspective} for an illustration of trajectory and state-space perspectives.

\begin{figure}
\centering
\includegraphics[width=\linewidth]{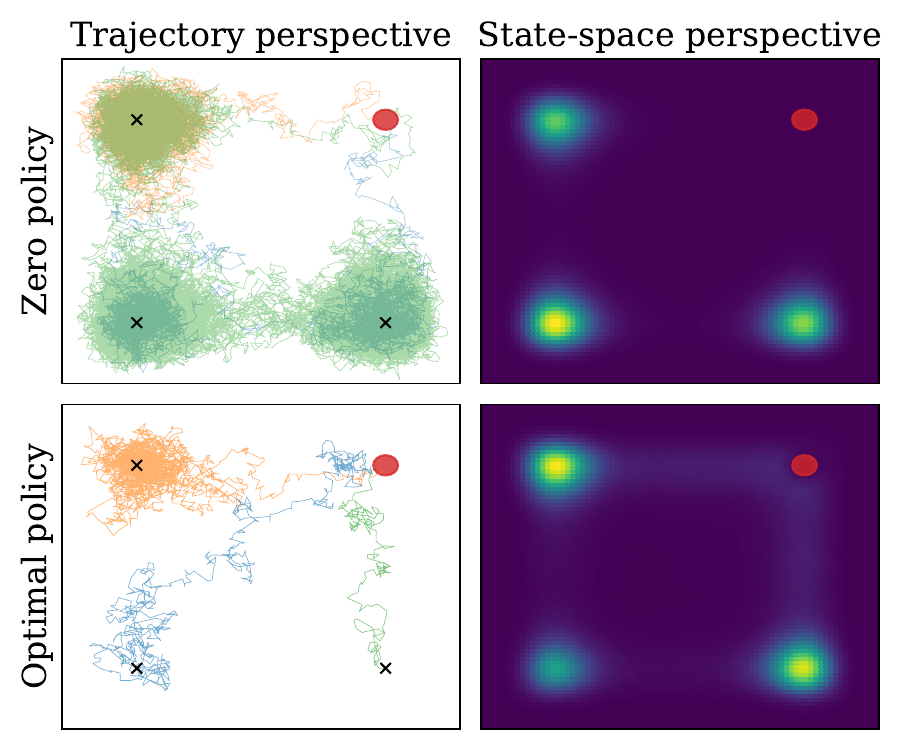}
\caption{We illustrate trajectory and state-space perspectives on the reinforcement learning problem. In the left panel, we plot three trajectories that start at the black crosses, respectively, and run until hitting the target set displayed in red under the dynamics that is governed by a multi-well potential, see \Cref{sec: double well numerics} for details. The right-hand side displays the corresponding state-space density $\rho^\pi$, as defined in \eqref{eq: definition state-space density}. In the first row we choose the initial (deterministic) policy that only returns zero actions and in the second row we consider the (learned) optimal policy according to the problem defined in \Cref{sec: double well numerics}.
}
\label{fig: trajectory vs. state-space perspective}
\end{figure}

We can now state the state-space version of the expected return defined in \eqref{eq: expected return}, which has been derived in Theorem 4 in \citet{bojun2020steady} in a discrete space setting.

\begin{proposition}[State-space expected return] 
\label{prop: costs state-space perspective}
    Assuming $\P(N < \infty) = 1$, the expected return \eqref{eq: expected return} can be written as
\begin{equation}
\label{eq: state-space expected return}
    J(\pi) = \E_\pi[N + 1] \, \E_{\substack{s \sim \rho^\pi \\ a \sim \pi(s, \cdot)}} \left[ r(s, a) \right],
\end{equation}
where $\rho^\pi$ is the state-space density defined in \eqref{eq: definition state-space density}.
\end{proposition}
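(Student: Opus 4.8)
The plan is to start from the definition of the expected return in \eqref{eq: expected return} (with $r_n = r$) and rewrite the finite sum over the random horizon $N$ using the indicator trick, namely $\sum_{n=0}^N r(S_n, A_n) = \sum_{n=0}^\infty \mathbbm{1}_{\{n \le N\}} r(S_n, A_n)$. Since $N$ is almost surely finite, this infinite sum is really a finite one along every trajectory, and under a mild integrability assumption (which lets us exchange expectation and summation via Tonelli/Fubini, using that $r$ is integrable against the on-policy occupation measure) we get $J(\pi) = \sum_{n=0}^\infty \E_\pi[\mathbbm{1}_{\{n \le N\}}\, r(S_n, A_n)]$.

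Next I would identify each summand with an integral against the time-indexed density $\rho^\pi_n$ and the policy $\pi$. The key observation is that the convention introduced after \eqref{def: n-step function} — namely that the process is stopped after $N$ steps, so that $\P_\pi(S_n \in \Lambda)$ really means $\P_\pi(S_n \in \Lambda,\, n \le N)$ — makes the indicator $\mathbbm{1}_{\{n \le N\}}$ already baked into $\rho^\pi_n$. Hence $\E_\pi[\mathbbm{1}_{\{n \le N\}}\, r(S_n, A_n)] = \int_{\mathcal S} \int_{\mathcal A} \rho^\pi_n(s)\, \pi(s,a)\, r(s,a)\, \mathrm da\, \mathrm ds$ (using that the action at time $n$ is drawn from $\pi(S_n, \cdot)$ conditionally on $S_n$, and that the stopping decision at time $n$ depends only on $S_n$, so the stopped density $\rho_n^\pi$ is the correct marginal). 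Summing over $n$ and swapping the (finite, by Lemma~\ref{lem: eta region counting} applied with $r$ bounded, or by absolute convergence) sum with the integrals gives $J(\pi) = \int_{\mathcal S}\int_{\mathcal A} \eta^\pi(s)\, \pi(s,a)\, r(s,a)\, \mathrm da\, \mathrm ds$ with $\eta^\pi = \sum_n \rho^\pi_n$ as in \eqref{eq: definition state-space density}.

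Finally, I would normalize: write $\eta^\pi = Z^\pi \rho^\pi$ by the definition in \eqref{eq: definition state-space density}, pull the constant $Z^\pi$ out of the integral, and recognize the remaining integral as $\E_{s\sim\rho^\pi,\, a\sim\pi(s,\cdot)}[r(s,a)]$. By Lemma~\ref{lem: eta region counting}, $Z^\pi = \E_\pi[N+1]$, which yields exactly \eqref{eq: state-space expected return}.

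The main obstacle I anticipate is not any single algebraic step but rather making the interchange of summation and expectation (equivalently, of the countable sum and the integrals) fully rigorous — one needs either boundedness of $r$, or an integrability hypothesis of the form $\E_\pi[\sum_{n=0}^N |r(S_n,A_n)|] < \infty$, together with $\E_\pi[N+1] = Z^\pi < \infty$ so that $\rho^\pi$ is a genuine probability density; this last point is where $\P(N<\infty)=1$ alone is not quite enough and one implicitly wants $N$ to have finite expectation. I would state this as a standing integrability assumption (consistent with how Lemma~\ref{lem: eta region counting} is phrased) and then the rest is bookkeeping. A secondary subtlety worth a sentence is justifying that conditioning on $S_n$ decouples the reward's action-average into the $\pi(s,a)$ factor even in the presence of the stopping event, which holds because $\{n \le N\}$ is $\sigma(S_0,\dots,S_n)$-measurable and hence independent of $A_n$ given $S_n$.
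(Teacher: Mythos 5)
Your proposal is correct and follows essentially the same route as the paper's proof: the indicator trick $\sum_{n=0}^N = \sum_{n=0}^\infty \mathbbm{1}_{\{n\le N\}}$, identification of each summand with an integral against the stopped marginal $\rho^\pi_n$ (the paper routes this through the $n$-step transition function $p_n^\pi$, but the content is identical), summation to $\eta^\pi$, and normalization via $Z^\pi = \E_\pi[N+1]$ from \Cref{lem: eta region counting}. Your explicit flagging of the integrability hypotheses needed for the sum--expectation interchange and for $Z^\pi<\infty$ is a reasonable addition that the paper leaves implicit.
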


Contrary to classical results, we note the appearance of the expected runtime in \eqref{eq: state-space expected return}, acting as a scaling factor. Loosely speaking, with the change from the trajectory to the state-space perspective one omits the dynamics and therefore the runtime information, and \Cref{prop: costs state-space perspective} shows the correct way of integrating this information back into the state-space perspective. We refer to \Cref{app: proofs} for the proof and to \Cref{rem: alternative derivation of state-space expected return} for an alternative derivation, which might also bring some further intuition.

For infinite time horizons, often the identity 
\begin{equation}
    \bar{J}(\pi) =  \E_{\substack{s \sim \widetilde{\eta}^\pi \\ a \sim \pi(s, \cdot)}} \left[ r(s, a) \right]
\end{equation}
is stated, where, in analogy to \eqref{eq: definition state-space density}, $\widetilde{\eta}^\pi := \sum_{n = 0}^\infty \gamma^{n} \rho^\pi_n$ is called \textit{discounted state distribution}, see e.g. \citet{silver2014deterministic}. We note, however, that $\widetilde{\eta}^\pi$ is not a density since it is not normalized, and the correct expression for the infinite time case would be
\begin{equation}
    J(\pi) = \frac{1}{1 - \gamma}  \E_{\substack{s \sim \widetilde{\rho}^\pi \\ a \sim \pi(s, \cdot)}} \left[ r(s, a) \right],
\end{equation}
where $\widetilde{\rho}^\pi$ is the normalized version of $\widetilde{\eta}^\pi$, thus aligning with \Cref{lem: discounted infinite as random time problem}.

\vspace{0.5cm}

\begin{remark}[Sampling from $\rho^\pi$]
    While we call expectations w.r.t. $\rho^\pi$ state-space-based, we note that typically, in order to approximate the expectations numerically, one still needs to simulate trajectories. However, at the same time, as mentioned before, the state-space perspective allows to readily design off-policy learning strategies and to use replay buffers (cf. \citet{degris2012off}). 
\end{remark}

\vspace{0.2cm}

\subsection{Optimization and policy gradient theorems}
\label{sec: optimization and policy gradient theorems}

The notorious question in reinforcement learning is how to optimize the expected return \eqref{eq: expected return} w.r.t. the policy $\pi$ (for stochastic policies) or w.r.t. the function $\mu$ (for deterministic policies). In practice, we typically assume that $\pi$ or $\mu$ are parametrized by the parameter vector $\theta \in \R^p$, and we may write $\pi_\theta$ or $\mu_\theta$, respectively. For ease of notation, we often omit the parameter dependence, however.

A quantity that often appears in the context of optimization is the so-called \textit{$Q$-value function} defined as the expected return conditioned on a state-action pair,
\begin{equation}
\label{eq: def Q-value function}
    Q^\pi(s, a) := \E_\pi \left[ \sum_{n=0}^N r(S_n, A_n) \Big| S_0 = s, A_0 = a \right].
\end{equation}

We note that starting at $n = 0$ holds without loss of generality and one could also start at any other time, since the problem is time-autonomous. Further, one can define the  \textit{value function} (or \textit{return-to-go}) as $V^\pi(s) = \E_{a \sim \pi (s, \cdot)}\left[Q^\pi(s, a) \right]$.
We note that for the expected return \eqref{eq: expected return} it holds $J(\pi) = \E_{s \sim \rho_0}\left[V^\pi(s) \right]$ and refer to \Cref{app: background on RL} for further details.

\vspace{0.2cm}

The following statement shows how one can compute gradients w.r.t. stochastic policies of expected returns that incorporate random stopping times. A proof can be found in \Cref{app: proofs}.

\vspace{0.5cm}

{\begin{proposition}[Policy gradient]
\label{prop: stochastic policy gradient}
For the gradient of the expected return \eqref{eq: expected return} it holds
\begin{align}
\label{eq: policy gradient trajectory perspective}
&\nabla_\theta J(\pi_\theta) = \E_\pi \left[ \sum_{n=0}^N \nabla_\theta \log \pi_\theta(S_n, A_n) Q^{\pi}(S_n, A_n) \right] \\
\label{eq: policy gradient state-space perspective}
&\quad = \E_\pi[N + 1] \, \E_{\substack{s \sim \rho^\pi \\ a \sim \pi(s, \cdot)}} \left[ \nabla_\theta \log \pi_\theta(s, a) Q^{\pi}(s, a) \right],
\end{align} 
where $\rho^\pi$ is the state-space density defined in \eqref{eq: definition state-space density}.
\end{proposition}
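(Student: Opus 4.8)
The plan is to derive the trajectory-based formula \eqref{eq: policy gradient trajectory perspective} first via the classical log-derivative (REINFORCE) trick, being careful to track the contribution of the random horizon $N$, and then to convert it to the state-space form \eqref{eq: policy gradient state-space perspective} by exactly the same bookkeeping that turns \eqref{eq: expected return} into \eqref{eq: state-space expected return} in \Cref{prop: costs state-space perspective}. First I would write the expected return as an integral over full trajectories $\tau = (s_0, a_0, s_1, a_1, \dots)$ against the discrete path measure induced by $\rho_0$, $p$ and $\pi_\theta$, using the terminal set $\mathcal{T}$ to encode $N$. The key observation is that the event $\{N = n\}$ is determined by the states $s_0, \dots, s_n$ alone (first hitting of $\mathcal{T}$), so it carries \emph{no} $\theta$-dependence beyond what is already present through the transition densities $p$ and the policy factors $\pi_\theta(s_k, a_k)$; in particular the horizon introduces no extra boundary term when we differentiate. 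Concretely, I would group the path integral by the value of $N$, write the density of the length-$n$ trajectory as $\rho_0(s_0)\prod_{k=0}^{n-1}\pi_\theta(s_k,a_k)p(s_{k+1},s_k,a_k) \cdot \pi_\theta(s_n,a_n) \cdot \mathbbm{1}_{\mathcal{T}}(s_n)\prod_{k<n}\mathbbm{1}_{\mathcal{T}^c}(s_k)$, and then apply $\nabla_\theta$ under the integral sign (justified by $\P(N<\infty)=1$ plus standard dominated-convergence / regularity assumptions, which I would state). Only the $\pi_\theta$ factors depend on $\theta$, so the log-derivative trick gives $\nabla_\theta \log(\text{density}) = \sum_{k=0}^{n}\nabla_\theta\log\pi_\theta(s_k,a_k)$, yielding
\begin{equation}
\nabla_\theta J(\pi_\theta) = \E_\pi\left[\left(\sum_{n=0}^N \nabla_\theta\log\pi_\theta(S_n,A_n)\right)\left(\sum_{m=0}^N r(S_m,A_m)\right)\right].
\end{equation}

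Next I would reduce the double sum to the $Q$-function form. Using the tower property and the Markov property, for each fixed $n$ the terms with $m < n$ contribute zero: conditionally on $(S_n, A_n, \dots)$ up to time $n$, the past rewards are fixed while $\E[\nabla_\theta\log\pi_\theta(S_n,A_n)\mid S_n] = \int \nabla_\theta\pi_\theta(S_n,a)\,\mathrm da = \nabla_\theta\int\pi_\theta(S_n,a)\,\mathrm da = 0$, the standard score-function baseline argument (care is needed that the integral is over all of $\mathcal{A}$ and that differentiation and integration commute). For $m \ge n$, conditioning on $(S_n, A_n)$ and invoking time-homogeneity collapses $\E_\pi[\sum_{m\ge n} r(S_m,A_m)\mid S_n, A_n] = Q^\pi(S_n, A_n)$ exactly as in \eqref{eq: def Q-value function}; here one must check that the first-hitting structure of $N$ is compatible with this conditioning, i.e. that from time $n$ onward the process behaves like a fresh copy started at $S_n$ with its own hitting time — which holds because $\mathcal{T}$ is fixed and the dynamics are stationary. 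This gives \eqref{eq: policy gradient trajectory perspective}.

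Finally, for \eqref{eq: policy gradient state-space perspective} I would observe that \eqref{eq: policy gradient trajectory perspective} has exactly the same structure as \eqref{eq: expected return} with the scalar reward $r(s,a)$ replaced by the $\R^p$-valued function $g(s,a) := \nabla_\theta\log\pi_\theta(s,a)\,Q^\pi(s,a)$ (with $Q^\pi$ frozen, not differentiated). Applying \Cref{prop: costs state-space perspective} componentwise to $g$ in place of $r$ — which is legitimate since that proposition only used the path-measure representation of $\E_\pi[\sum_{n=0}^N (\cdot)]$ and \Cref{lem: eta region counting}, not any special property of $r$ — immediately yields $\E_\pi[N+1]\,\E_{s\sim\rho^\pi, a\sim\pi(s,\cdot)}[g(s,a)]$, which is \eqref{eq: policy gradient state-space perspective}. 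The main obstacle is the rigorous justification of interchanging $\nabla_\theta$ with the infinite sum over horizons and the path integral: one needs a uniform (in $\theta$, on a neighborhood of the parameter of interest) integrable dominating function for the differentiated summands, which in turn requires some control on how the tail of $N$ and the score magnitudes depend on $\theta$; I would handle this by imposing explicit regularity assumptions (e.g. $\nabla_\theta\log\pi_\theta$ bounded and $\E_\pi[N^2]<\infty$ locally uniformly in $\theta$, or an analogous condition) and flag it as the technical heart of the argument.
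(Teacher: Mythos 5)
Your proposal is correct, but it takes a genuinely different route from the paper's. The paper works in the opposite direction and from a dynamic-programming identity: it first proves an ``unrolling lemma'' by induction on the Bellman equation,
\begin{equation*}
\nabla_\theta V^\pi(s) = \sum_{n=0}^{l-1}\int_\mathcal{S} p_n^\pi(s',s)\int_{\mathcal{A}}\nabla_\theta\pi_\theta(s',a)\,Q^\pi(s',a)\,\mathrm da\,\mathrm ds' + \int_\mathcal{S} p_l^\pi(s',s)\,\nabla_\theta V^\pi(s')\,\mathrm ds',
\end{equation*}
kills the remainder as $l\to\infty$ using boundedness of $\nabla_\theta V^\pi$ together with $\P_\pi(l\le N\mid S_0=s)\to 0$, integrates against $\rho_0$ to recognize $\eta^\pi=\rho^\pi\,\E_\pi[N+1]$, and thus obtains the state-space formula \eqref{eq: policy gradient state-space perspective} first; the trajectory formula \eqref{eq: policy gradient trajectory perspective} then follows by applying \Cref{prop: costs state-space perspective} with $r$ replaced by $\nabla_\theta\log\pi_\theta\cdot Q^\pi$. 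You instead differentiate the path measure directly, arrive first at the REINFORCE-type double-sum identity — which is exactly \eqref{eq: alternative trajectory policy gradient m=0}, proved in the paper only afterwards as part of \Cref{cor: alternative trajectory policy gradient} — reduce it to the $Q$-form via the score cancellation for $m<n$ and the Markov property for $m\ge n$, and only then pass to the state-space form by applying \Cref{prop: costs state-space perspective} in the forward direction. Your route has the merit of making the key structural fact transparent: the indicator of $\{N=n\}$ is a $\theta$-independent function on path space, so the policy-dependence of the stopping time contributes no extra boundary term, a point the paper only remarks on in a footnote. The technical burden is comparable: the paper assumes $\sup_s\|\nabla_\theta V^\pi(s)\|\le L_\theta$, while you need a dominating function to differentiate under the infinite sum over horizons, which you correctly flag. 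One detail to carry through carefully in a full write-up: the cross-term cancellation for $m<n$ must be performed with the indicator $\mathbbm{1}_{\{n\le N\}}$ inside the expectation, since the $n$-th score term is present only when $n\le N$; because $\{n\le N\}$ is determined by $S_0,\dots,S_{n-1}$ and is therefore measurable before $A_n$ is drawn, integrating out $A_n$ still yields zero, exactly as in the paper's proof of \Cref{cor: alternative trajectory policy gradient}.
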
}

We note that -- analogous to \Cref{sec: trajectory vs. state-space perspective} -- \Cref{prop: stochastic policy gradient} provides formulas following either a trajectory perspective, stated in \eqref{eq: policy gradient trajectory perspective}, or a state-space perspective, as in \eqref{eq: policy gradient state-space perspective}. For the former we note that a rigorous derivation for random time horizons has to the best of our knowledge not been conducted before\footnote{In fact, assuming $N$ to be deterministic yields the same trajectory-based formula as in \eqref{eq: policy gradient trajectory perspective} and it is remarkable that the dependency of $N$ on $\pi$ in the random stopping time case does not add any extra term.}. Further, we note that, in contrast to deterministic stopping times, one cannot simply apply automatic differentiation of the expected return w.r.t. the parameter $\theta$ for optimization due to the policy-dependency of the runtime. However, our formula \eqref{eq: policy gradient trajectory perspective} suggests the alternative objective
\begin{equation}
J_\mathrm{eff}(\pi_\theta, \pi_\vartheta) := \E_{\pi_\vartheta} \left[\sum_{n=0}^N \log \pi_\theta(S_n, A_n) Q^{\pi_\vartheta}(S_n, A_n) \right],
\end{equation}
which can be (auto-)differentiated w.r.t. $\theta$, yielding $\nabla_\theta J(\pi_\theta)$ when setting $\vartheta = \theta$ after differentiation, i.e.
\begin{equation}
\nabla_\theta J_\mathrm{eff}(\pi_\theta, \pi_\vartheta)\big|_{\vartheta = \theta} = \nabla_\theta J(\pi_\theta)
\end{equation}
(in practice we can simply detach $S_n$ and $A_n$ from the computational graph; for the analog reasoning in the state-space perspective see \Cref{app: computational details}). Further, we note that the state-space-based formula \eqref{eq: policy gradient state-space perspective} in fact also relies on trajectories due to the definition of the $Q$-value function in \eqref{eq: def Q-value function} -- it has already been stated in \citet{bojun2020steady} in a discrete space setting. To the best of our knowledge, the state-space formula in a continuous state setting is novel.

\begin{remark}[Interpretation as effective learning rate]
\label{rem: interpretation as effective learning rate}
As for the expected return in \Cref{prop: costs state-space perspective}, we note that the expected stopping time appears as a scaling factor in the state-space-based policy gradient formula \eqref{eq: policy gradient state-space perspective}. While one could be tempted to argue that this acts only as a constant multiplicative factor that can be absorbed into the learning rate in gradient based optimization (as, e.g., argued in \citet{sutton2018reinforcement}), we note that $\E_\pi[N + 1]$ depends on the current policy and can therefore substantially vary during the course of optimization. Conversely, omitting the factor leads to a different \textit{effective learning rate} in stochastic gradient ascent. To be precise, let $\{\gamma^{(k)}\}_k$ be a sequence of learning rates, then we may update the parameters via 
\begin{equation}
    \theta^{(k+1)} = \theta^{(k)} + \gamma^{(k)} \nabla_\theta J(\pi_{\theta^{(k)}}).
\end{equation}
If we instead update 
\begin{subequations}
\begin{align}
\theta^{(k+1)} &= \theta^{(k)} + \gamma^{(k)}   \nabla_{\theta} \widetilde{J}(\pi_{\theta^{(k)}}) \\
 &= \theta^{(k)} + \widetilde{\gamma}^{(k)}   \nabla_\theta J(\pi_{\theta^{(k)}}),
\end{align}
\end{subequations}
where 
\begin{equation}
\label{eq: scaled state-space gradient}
\nabla_\theta \widetilde{J} := \nabla_\theta J / \E_{\pi_{\theta^{(k)}}}[N + 1]
\end{equation}
is the (wrong) gradient with omitted scaling factor, then $\widetilde{\gamma}^{(k)} = \gamma^{(k)} / \E_{\pi_{\theta^{(k)}}}[N + 1]$ is the effective learning rate relating to gradient ascent relying on the correct gradient. We note that this learning rate may substantially vary with changing expected runtimes in the course of the optimization and refer to \Cref{sec: numerical experiments} for experiments investigating its effect on the optimization performance. Importantly, we note that previous policy gradient theorems have either stated the version without expected stopping time, i.e. have considered \eqref{eq: scaled state-space gradient} for finite time scenarios, or only dealt with the infinite time horizon case \cite{sutton1999policy}. Both cases lead to incorrect formulas when being applied to random time horizon problems. 
\end{remark}

For the trajectory-based policy gradient estimator \eqref{eq: policy gradient trajectory perspective} we can additionally derive the following alternative versions, which might be advantageous from a computational perspective and are proved in \Cref{app: proofs}.

\begin{corollary}[Alternative trajectory-based versions of the policy gradient]
\label{cor: alternative trajectory policy gradient}
    For the gradient of the expected return \eqref{eq: expected return} it holds
    \begin{align}
    \label{eq: alternative trajectory policy gradient m=0}
&\nabla_\theta J(\pi) = \E_\pi \left[ \sum_{n=0}^N \nabla_\theta \log \pi_\theta(S_n, A_n) \sum_{m=0}^N r(S_m,A_m) \right] \\
    \label{eq: alternative trajectory policy gradient m=n}
 & \quad = \E_\pi \left[ \sum_{n=0}^N \nabla_\theta \log \pi_\theta(S_n, A_n) \sum_{m=n}^N r(S_m,A_m) \right] \\
    \label{eq: alternative trajectory policy gradient baseline} 
  & \quad = \E_\pi \left[ \sum_{n=0}^N \nabla_\theta \log \pi_\theta(S_n, A_n) \left(Q^\pi(S_n, A_n) - b(S_n) \right) \right],
\end{align} 
where $b : \mathcal{S} \to \R$ is an arbitrary function (sometimes called \emph{baseline}).
\end{corollary}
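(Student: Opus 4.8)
The plan is to derive all three identities from the already-established trajectory-based formula \eqref{eq: policy gradient trajectory perspective}, namely $\nabla_\theta J(\pi) = \E_\pi \left[ \sum_{n=0}^N \nabla_\theta \log \pi_\theta(S_n, A_n) Q^{\pi}(S_n, A_n) \right]$, by manipulating what sits next to the score function $\nabla_\theta \log \pi_\theta(S_n,A_n)$ inside the expectation. The guiding principle throughout is that terms of the form $\E_\pi[\nabla_\theta \log \pi_\theta(S_n,A_n) \, f(S_0,A_0,\dots,S_n)]$ or $\E_\pi[\nabla_\theta \log \pi_\theta(S_n,A_n)\, g(S_n)]$ vanish, because conditioning on $S_n$ (and the past) and integrating over $A_n \sim \pi(S_n,\cdot)$ gives $\E_{a\sim\pi(S_n,\cdot)}[\nabla_\theta \log \pi_\theta(S_n,a)] = \nabla_\theta \int \pi_\theta(S_n,a)\,\mathrm{d}a = \nabla_\theta 1 = 0$. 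A subtlety is that the stopping time $N$ depends on the trajectory, so one must be careful that the "future" rewards multiplying the score at time $n$ are handled via the tower property with the correct $\sigma$-algebra; I would condition on $\mathcal{F}_n = \sigma(S_0,A_0,\dots,S_n)$ and use the strong Markov / time-homogeneity structure so that $\E_\pi[\sum_{m=n}^N r(S_m,A_m)\mid \mathcal{F}_n, A_n] = Q^\pi(S_n,A_n)$, which is precisely \eqref{eq: def Q-value function} shifted in time (legitimate since the problem is time-autonomous, as remarked after the definition of $Q^\pi$).

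For \eqref{eq: alternative trajectory policy gradient m=n}: replace $Q^\pi(S_n,A_n)$ by $\sum_{m=n}^N r(S_m,A_m)$ and check the two expressions have the same expectation when paired with the score. Conditioning on $(\mathcal{F}_n, A_n)$, the inner conditional expectation of $\sum_{m=n}^N r(S_m,A_m)$ equals $Q^\pi(S_n,A_n)$ by the Markov property and time-homogeneity, while $\nabla_\theta \log\pi_\theta(S_n,A_n)$ is $(\mathcal{F}_n,A_n)$-measurable; the tower property then closes the gap. For \eqref{eq: alternative trajectory policy gradient m=0}: the difference between this and \eqref{eq: alternative trajectory policy gradient m=n} is the term $\E_\pi[\sum_{n=0}^N \nabla_\theta\log\pi_\theta(S_n,A_n)\sum_{m=0}^{n-1} r(S_m,A_m)]$, which I claim is zero term-by-term in $n$: for each fixed $n$, condition on $\mathcal{F}_n$; the "past rewards" $\sum_{m=0}^{n-1} r(S_m,A_m)$ and the event $\{n\le N\}$ (which is $\mathcal{F}_{n-1}$-measurable, hence $\mathcal{F}_n$-measurable) are all measurable, and $\E_\pi[\nabla_\theta\log\pi_\theta(S_n,A_n)\mid\mathcal{F}_n] = 0$ by the normalization argument above. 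One must handle the random upper limit carefully: write $\sum_{n=0}^N = \sum_{n=0}^\infty \mathbbm{1}_{\{n\le N\}}$ (using $\P(N<\infty)=1$), and note $\{n \le N\}$ depends only on $S_0,\dots,S_{n}$ hence is $\mathcal{F}_n$-measurable, so pulling it inside the conditional expectation is valid.

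For the baseline identity \eqref{eq: alternative trajectory policy gradient baseline}: it suffices to show $\E_\pi[\sum_{n=0}^N \nabla_\theta\log\pi_\theta(S_n,A_n)\, b(S_n)] = 0$ for arbitrary $b:\mathcal{S}\to\R$. Again write the sum as $\sum_{n=0}^\infty \mathbbm{1}_{\{n\le N\}}(\cdot)$, fix $n$, condition on $\mathcal{F}_n$; since $b(S_n)\,\mathbbm{1}_{\{n\le N\}}$ is $\mathcal{F}_n$-measurable and $\E_\pi[\nabla_\theta \log\pi_\theta(S_n,A_n)\mid\mathcal{F}_n]=0$, the term vanishes. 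Summing over $n$ (with Fubini/dominated convergence justified by the standing integrability assumptions implicit in $J$ and the policy gradient being well-defined) gives the claim, and subtracting this zero quantity from \eqref{eq: policy gradient trajectory perspective} yields \eqref{eq: alternative trajectory policy gradient baseline}.

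The main obstacle I anticipate is the measure-theoretic bookkeeping around the random stopping time $N$: ensuring that $\{n\le N\}$ is placed in the right filtration, that interchanging $\sum_n$ with $\E_\pi$ is justified (which needs an integrability hypothesis — presumably $\E_\pi[\sum_{n=0}^N |r(S_n,A_n)|] < \infty$ together with $\E_\pi[\sum_{n=0}^N \|\nabla_\theta\log\pi_\theta(S_n,A_n)\|\,|\cdot|] < \infty$, or a dominated-convergence argument via the geometric-tail structure of $N$), and that the time-shifted identification of conditional reward sums with $Q^\pi$ is rigorous in continuous state space. Everything else is the standard "score function has zero conditional mean" trick applied three times with different measurable multipliers.
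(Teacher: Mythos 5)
Your proposal is correct and follows essentially the same route as the paper's proof: the tower property with an $\mathcal{F}_n$-measurable multiplier gives \eqref{eq: alternative trajectory policy gradient m=n}, and the zero conditional mean of the score function eliminates the past-reward and baseline terms for \eqref{eq: alternative trajectory policy gradient m=0} and \eqref{eq: alternative trajectory policy gradient baseline}. The only cosmetic difference is that the paper writes these cancellations as explicit integrals against $n$-step transition densities (and routes the baseline identity through the state-space formula of \Cref{prop: costs state-space perspective} before returning to trajectory form), whereas you phrase the same cancellations via conditional expectations and the $\mathcal{F}_n$-measurability of $\{n \le N\}$.
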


For deterministic policies, the policy gradient takes a slightly different form, which we state in the following. We note that for infinite time horizons it has already been derived in \citet{silver2014deterministic}, however, we have not seen a formula for random time horizons before.

\begin{proposition}[Policy gradient for deterministic policies]
\label{prop: deterministic policy gradient}
For the gradient of the expected return \eqref{eq: expected return} it holds\footnote{We denote with $\nabla_\theta \mu_\theta$ the Jacobian matrix of $\mu_\theta$.}
\begin{align}
\label{eq: deterministic policy gradient trajectory perspective}
&\nabla_\theta J(\mu_\theta) = \E_{\mu} \left[ \sum_{n=0}^N \nabla_\theta \mu_\theta(S_n)^\top \nabla_{a} Q^{\mu_\theta}(S_n,a) \Big|_{a=\mu_{\theta}(S_n)} \right] \\
\label{eq: deterministic policy gradient state-space perspective}
&\quad =\E_\mu[N + 1] \, \E_{s \sim \rho^{\mu}} \left[ \nabla_\theta \mu_\theta(s)^\top \nabla_{a} Q^{\mu_\theta}(s,a) \Big|_{a=\mu_{\theta}(s)} \right],
\end{align} where $\rho^\mu$ is the state-space density defined in \eqref{eq: definition state-space density} (with deterministic instead of stochastic policy).
\end{proposition}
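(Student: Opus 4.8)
The plan is to adapt the recursive, dynamic-programming style argument that \citet{silver2014deterministic} use for deterministic policies in infinite time, replacing the role of the discount factor $\gamma$ by the absorbing behavior at the terminal set $\mathcal{T}$, so that the geometric truncation of the unrolled series is supplied by the almost surely finite stopping time $N$ rather than by $\gamma$. Write $V^{\mu_\theta}(s) := Q^{\mu_\theta}(s, \mu_\theta(s))$, so that $J(\mu_\theta) = \E_{s \sim \rho_0}[V^{\mu_\theta}(s)]$. Splitting the defining identity of $Q^{\mu_\theta}$ in \eqref{eq: def Q-value function} according to whether $s$ is terminal gives $Q^{\mu_\theta}(s,a) = r(s,a)$ for $s \in \mathcal{T}$ (the trajectory stops at $n=0$), while for $s \notin \mathcal{T}$, using \eqref{eq: state-action transition density},
\begin{equation}
V^{\mu_\theta}(s) = r(s, \mu_\theta(s)) + \int_{\mathcal{S}} p(s', s, \mu_\theta(s))\, V^{\mu_\theta}(s')\, \mathrm{d}s'.
\end{equation}

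Next I would differentiate these identities in $\theta$. Setting $g(s) := \nabla_\theta \mu_\theta(s)^\top \nabla_a Q^{\mu_\theta}(s,a)\big|_{a = \mu_\theta(s)}$, the chain rule shows that the terms in which $\theta$ enters through the action $\mu_\theta(s)$ collect exactly into $g(s)$: for $s \in \mathcal{T}$ this is immediate from $Q^{\mu_\theta}(s,a) = r(s,a)$, and for $s \notin \mathcal{T}$ it follows since $V^{\mu_\theta}(s')$ does not depend on $a$, whence $\nabla_a Q^{\mu_\theta}(s,a) = \nabla_a r(s,a) + \int_{\mathcal{S}} \nabla_a p(s',s,a)\, V^{\mu_\theta}(s')\, \mathrm{d}s'$. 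What remains is the recursion
\begin{equation}
\nabla_\theta V^{\mu_\theta}(s) = g(s) + \mathbbm{1}_{s \notin \mathcal{T}} \int_{\mathcal{S}} p(s', s, \mu_\theta(s))\, \nabla_\theta V^{\mu_\theta}(s')\, \mathrm{d}s'.
\end{equation}
Unrolling this fixed point along the dynamics — the indicator makes $\mathcal{T}$ absorbing, so the $k$-th iterate only contributes on trajectories with $k \le N$ — yields $\nabla_\theta V^{\mu_\theta}(s) = \E_\mu\big[\sum_{n=0}^N g(S_n) \mid S_0 = s\big]$. Taking the expectation over $s \sim \rho_0$ gives \eqref{eq: deterministic policy gradient trajectory perspective}, and \Cref{lem: eta region counting} (applied to $g$ via the usual approximation of $g$ by simple functions) turns this into $\E_\mu[\sum_{n=0}^N g(S_n)] = \int_{\mathcal{S}} g(s)\, \eta^\mu(s)\, \mathrm{d}s = \E_\mu[N+1]\, \E_{s \sim \rho^\mu}[g(s)]$, which is \eqref{eq: deterministic policy gradient state-space perspective}.

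The main obstacle is rigor rather than structure: one must justify (i) differentiating under the integral sign in the transition term, which requires mild regularity and integrability of $s \mapsto p(s',s,a)$ together with its $a$-gradient and suitable growth control on $V^{\mu_\theta}$, and (ii) interchanging $\nabla_\theta$ with the infinite unrolling, i.e. showing that the truncated quantities $\E_\mu\big[\sum_{n=0}^{N \wedge K} g(S_n) \mid S_0 = s\big]$ converge as $K \to \infty$ with a uniformly controlled tail. Both rest on the standing assumption $\P(N < \infty) = 1$ (which already underlies \Cref{lem: eta region counting}), supplemented by dominated-convergence-type conditions on $r$, $p$ and $\mu_\theta$; I would collect these as standing regularity assumptions, exactly in the spirit of what is implicitly needed for the stochastic case in \Cref{prop: stochastic policy gradient}.

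As an alternative derivation one could attempt to pass to the limit in the stochastic policy gradient \eqref{eq: policy gradient trajectory perspective} along smoothed policies $\pi_\theta^\sigma(s,\cdot) = \mathcal{N}(\mu_\theta(s), \sigma^2 I)$ with $\sigma \to 0$ (invoking \Cref{rem: deterministic as stochastic policy}); an integration by parts in the action variable recovers the factor $\nabla_a Q^{\mu_\theta}$. However, controlling the blow-up of $\nabla_\theta \log \pi_\theta^\sigma$ jointly with the $\sigma$-dependence of both $Q^{\pi_\theta^\sigma}$ and the stopping time $N$ is delicate, so I would keep the recursive argument as the primary route and mention this limiting picture only as motivation.
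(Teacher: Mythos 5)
Your proposal is correct and follows essentially the same route as the paper: differentiate the Bellman identity for $V^{\mu_\theta}$ to obtain the recursion with inhomogeneity $g(s)=\nabla_\theta\mu_\theta(s)^\top\nabla_a Q^{\mu_\theta}(s,a)\big|_{a=\mu_\theta(s)}$, unroll it (the paper's \Cref{lem: unrolling dpg lemma}, with the absorption at $\mathcal{T}$ encoded in the sub-probability kernels $p_n^\mu$ rather than your explicit indicator), kill the tail using $\P(N<\infty)=1$ plus a boundedness assumption on $\nabla_\theta V^\mu$, and identify $\sum_n\rho_n^\mu=\eta^\mu=\E_\mu[N+1]\,\rho^\mu$. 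The only cosmetic difference is the order (you derive the trajectory form first and then the state-space form via \Cref{lem: eta region counting}, whereas the paper obtains the state-space form first and recovers the trajectory form from \Cref{prop: costs state-space perspective}), and your flagged regularity caveats match the assumptions the paper imposes.
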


For the deterministic policy gradient we can further derive a version in the model-based setting, where we assume the knowledge of the transition density $p$ defined in \eqref{eq: state-action transition density}. 

\begin{corollary}[Model-based policy gradient for deterministic policies]
\label{cor: model-based deterministic policy gradient}
For the gradient of the expected return \eqref{eq: expected return} it holds
\begin{align}
\begin{split}
\label{eq: model-based deterministic policy gradient trajectory}
&\nabla_\theta J(\mu_\theta)
= \E_{\mu} \Bigg[\sum_{n=0}^{N} \nabla_\theta \mu_\theta(S_n)^\top  \Big( \nabla_a r(S_n, a)  \\
&\quad+  V^\mu(S_{n+1}) \nabla_a \log p(S_{n+1}, S_n, a) \Big)\Big|_{a=\mu_\theta(S_n)} \Bigg]
\end{split} \\
\begin{split}
\label{eq: model-based deterministic policy gradient state}
&= \E_\mu[N + 1] \, \E_{\substack{s \sim \rho^\mu, \\ s' \sim p^\mu(\cdot, s)}} \Bigg[ \nabla_\theta \mu_\theta(s)^\top  \Big( \nabla_a r(s, a)  \\ 
&\quad \qquad\qquad + V^\mu(s') \nabla_a \log p(s', s, a)  \Big) \Big|_{a=\mu_\theta(s)} \Bigg].
\end{split}
\end{align}
\end{corollary}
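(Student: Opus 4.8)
The plan is to start from the trajectory-based deterministic policy gradient \eqref{eq: deterministic policy gradient trajectory perspective} in \Cref{prop: deterministic policy gradient} and rewrite the term $\nabla_a Q^{\mu_\theta}(S_n, a)|_{a = \mu_\theta(S_n)}$ by opening up the $Q$-value function via its definition. Using the (random time) Bellman relation $Q^{\mu}(s, a) = r(s, a) + \E\left[ V^\mu(S_{n+1}) \mid S_n = s, A_n = a \right]$, which holds because $N$ is the first hitting time of $\mathcal{T}$ and is thus a stopping time adapted to the trajectory, we have
\begin{equation*}
Q^{\mu}(s, a) = r(s, a) + \int_{\mathcal{S}} p(s', s, a) V^\mu(s') \, \mathrm{d}s'
\end{equation*}
for $s \notin \mathcal{T}$ (and $Q^\mu(s,a) = r(s,a)$ for $s \in \mathcal{T}$, contributing no gradient term for the transition). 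Differentiating in $a$ and using the log-derivative identity $\nabla_a p(s', s, a) = p(s', s, a) \nabla_a \log p(s', s, a)$ yields
\begin{equation*}
\nabla_a Q^\mu(s,a) = \nabla_a r(s,a) + \int_{\mathcal{S}} p(s', s, a) V^\mu(s') \nabla_a \log p(s', s, a) \, \mathrm{d}s' .
\end{equation*}
Recognizing the integral as a conditional expectation over $S_{n+1} \sim p(\cdot, S_n, \mu_\theta(S_n))$ and substituting back into \eqref{eq: deterministic policy gradient trajectory perspective}, then using the tower property to absorb the inner conditional expectation into the outer $\E_\mu$, gives \eqref{eq: model-based deterministic policy gradient trajectory}.

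For the state-space version \eqref{eq: model-based deterministic policy gradient state}, I would perform exactly the same substitution inside the state-space formula \eqref{eq: deterministic policy gradient state-space perspective} instead, replacing $\nabla_a Q^\mu$ by the model-based expression and then writing the resulting nested integral against $\rho^\mu(s) p(s', s, \mu_\theta(s))$ as an expectation over $s \sim \rho^\mu$ and $s' \sim p^\mu(\cdot, s) := p(\cdot, s, \mu_\theta(s))$. Alternatively, and perhaps cleaner, one can derive \eqref{eq: model-based deterministic policy gradient state} from \eqref{eq: model-based deterministic policy gradient trajectory} directly by the same trajectory-to-state-space reduction that was used to pass from \eqref{eq: deterministic policy gradient trajectory perspective} to \eqref{eq: deterministic policy gradient state-space perspective} in \Cref{prop: deterministic policy gradient}: the summand depends only on $(S_n, S_{n+1})$ through a fixed function, so $\E_\mu\left[\sum_{n=0}^N g(S_n, S_{n+1})\right]$ collapses to $\E_\mu[N+1]\, \E_{s \sim \rho^\mu, s' \sim p^\mu(\cdot, s)}[g(s, s')]$ by \Cref{lem: eta region counting} applied to the pair process.

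The main obstacle I anticipate is handling the boundary/terminal behavior carefully: when $S_n \in \mathcal{T}$ the trajectory stops, so there is no transition $S_n \to S_{n+1}$, and $Q^\mu(s, a) = r(s, a)$ there with no integral term. One must check that differentiating through the indicator of $\{S_n \in \mathcal{T}\}$ does not produce extra terms — this is fine because the reward-to-go decomposition is applied pathwise and the event $\{n \le N\}$ is determined by $S_0, \dots, S_n$ and hence independent of the choice of $A_n$ (and of $\theta$ entering at step $n$), so interchanging $\nabla_a$ with the conditional expectation is justified. The other technical point, as usual, is justifying the interchange of differentiation and the (infinite) summation/integration, which requires the standing assumption $\P(N < \infty) = 1$ together with suitable integrability of $r$, $V^\mu$, and the relevant score functions $\nabla_a \log p$ and $\nabla_\theta \mu_\theta$; I would state these as the same regularity assumptions under which \Cref{prop: deterministic policy gradient} itself holds, so that no new hypotheses are needed beyond differentiability of $p$ in its action argument.
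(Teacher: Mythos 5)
Your proposal matches the paper's proof essentially step for step: both differentiate the Bellman identity $Q^\mu(s,a) = r(s,a) + \int_\mathcal{S} p(s',s,a)V^\mu(s')\,\mathrm{d}s'$ in $a$, apply the log-derivative trick to rewrite the integral as $\E_{s'\sim p(\cdot,s,a)}[\nabla_a \log p(s',s,a)V^\mu(s')]$, substitute into \Cref{prop: deterministic policy gradient}, and close the trajectory form with the tower property (the state-space form following by the same substitution in \eqref{eq: deterministic policy gradient state-space perspective}). Your additional remarks on terminal states and interchange of differentiation and integration are sensible added rigor but do not change the route.
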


We note that the alternative versions stated in \eqref{eq: alternative trajectory policy gradient m=0} and \eqref{eq: alternative trajectory policy gradient m=n} for stochastic policies hold for deterministic policies only in the model-based and not in the model-free case, see \Cref{cor: alternative trajectory dpg} in the appendix.

\begin{remark}[Relation to stochastic optimal control]
The trajectory-based formula \eqref{eq: model-based deterministic policy gradient trajectory} is equivalent to the control cost gradient in stochastic optimal control problems, e.g. stated in \citet{ribera2024improving}, when using certain Gaussian transition densities, see also \Cref{rem: connection to stochastic optimal control} in \Cref{app: proofs}.
We also note that in \citet{quer_connecting_2024} a version of \eqref{eq: model-based deterministic policy gradient trajectory} is heuristically derived after assuming $N$ to be deterministic. To the best of our knowledge, the presented model-based policy gradient formula with random time horizons has not been proved in the discrete time setting yet.
\end{remark}

\section{Numerical experiments}
\label{sec: numerical experiments}

In this section we compare the previously discussed gradient estimators on numerical examples. In particular, we compare trajectory and state-space based formulas and investigate the effect of neglecting the factor $\E_\pi[N + 1]$ in the policy gradient (PG) computations. To be precise, for all our experiments we compute the gradient either with the trajectory-based expression \eqref{eq: policy gradient trajectory perspective} (\textit{trajectory PG}), by the state-space expression \eqref{eq: policy gradient state-space perspective} (\textit{state-space PG}), or by the modified state-space expression \eqref{eq: scaled state-space gradient}, which omits the scaling factor and is therefore, strictly speaking, an incorrect gradient (\textit{state-space PG (biased)}). We refer to \Cref{rem: interpretation as effective learning rate} for interpreting this incorrect formula as an effective learning rate that changes over the course of optimization. Crucially, note that this effective learning rate is highly problem-specific and can not be controlled easily -- this is an obvious downside of the gradient \eqref{eq: scaled state-space gradient}. In order to assure fair comparisons, we first search for the optimal learning rate for each gradient approach in all of our experiments and use standard stochastic gradient ascent for optimization (in order to not have interacting effects with sophisticated optimization methods). We refer to Algorithms \ref{alg: alg1}-\ref{alg: alg4} in \Cref{app: computational details} for further computational details. The code can be found at \url{https://github.com/riberaborrell/rl-random-times}.

\subsection{Modified continuous mountain car problem}
\label{sec: mountain car}

\begin{figure*}[h]
\centering
\includegraphics[width=0.245\textwidth]{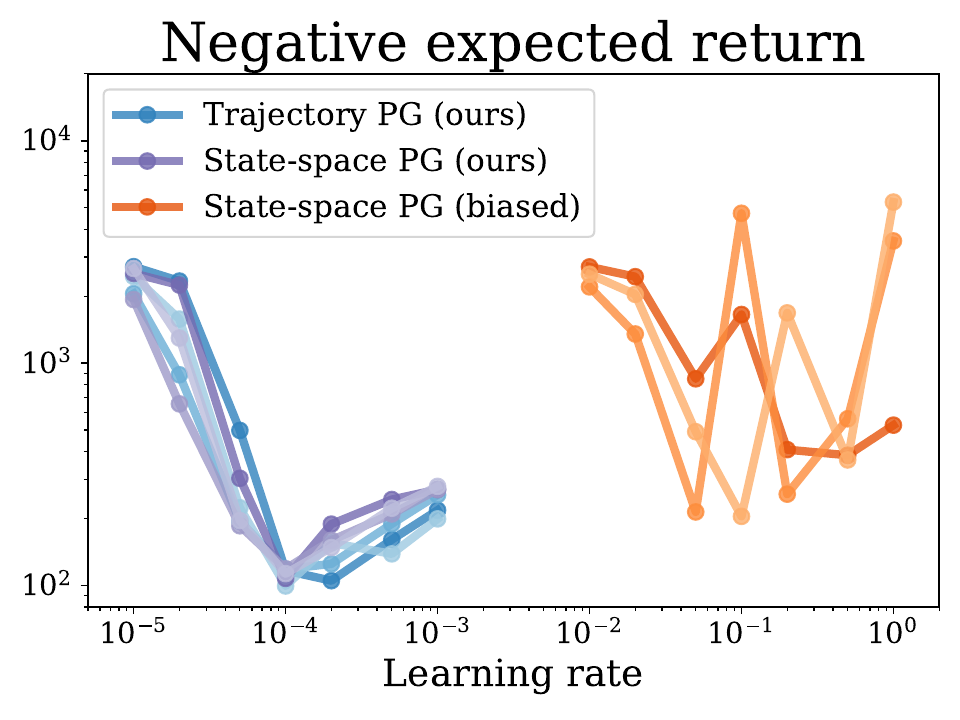}
\includegraphics[width=0.245\textwidth]{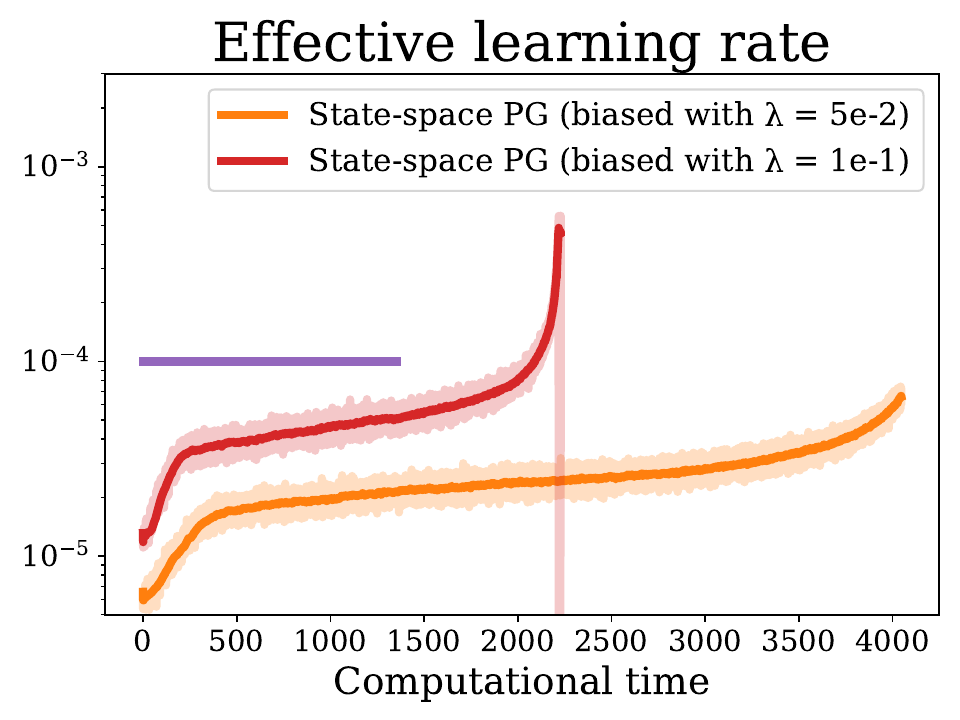}
\includegraphics[width=0.245\textwidth]{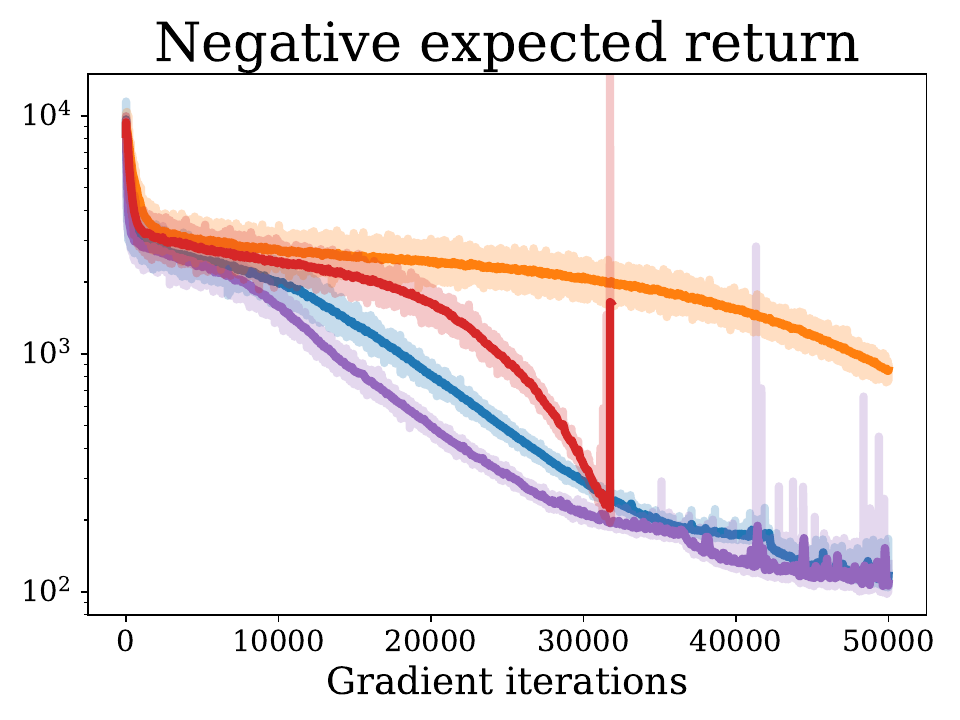}
\includegraphics[width=0.245\textwidth]{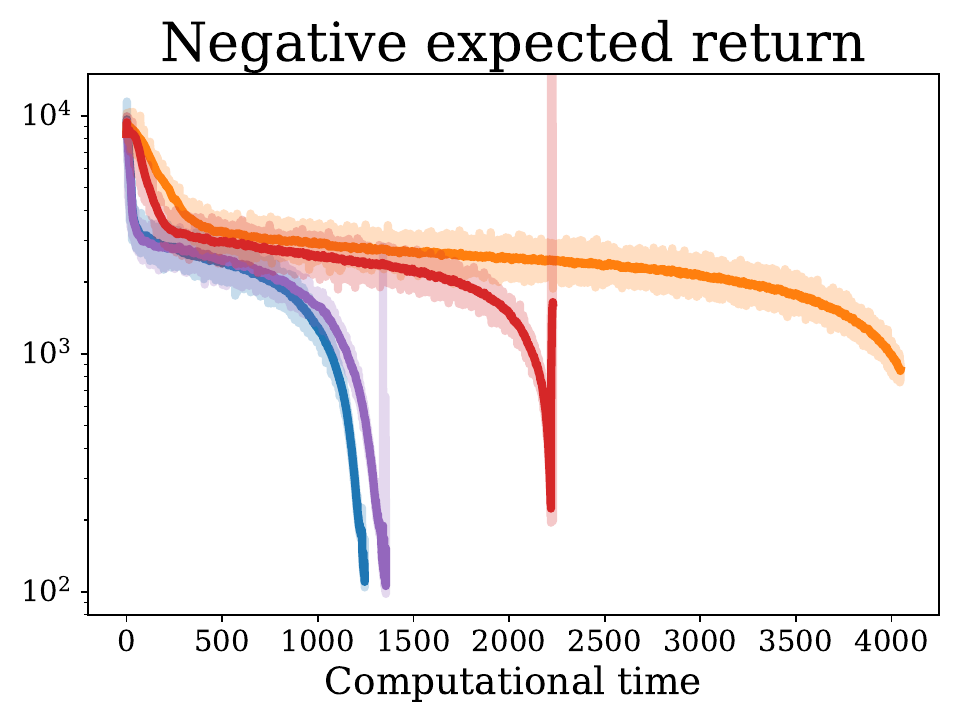}
\caption{We display the performance of the three different policy gradients (PG) on the mountain car problem described in \Cref{sec: mountain car}. In the left plot, the negative expected return is investigated for different learning rates, the different transparency values indicate different runs. For the performance plots, we always choose the best respective learning rate. The second plot shows the effective learning rates, see \Cref{rem: interpretation as effective learning rate} for an explanation (we always display a moving average above the raw data). We see that for the biased methods, the effective learning rate increases significantly over the course of the optimization. The two plots on the right-hand side display the negative expected return -- once per gradient step and once per computational time (in minutes). We see that the PG formulas incorporating the expected hitting time perform significantly better. For the biased formula, the algorithm stops at some point due to increased trajectory lengths and related memory issues.}
\label{fig: mountain car continuous}
\end{figure*}

The mountain car problem is a classical benchmark in reinforcement learning, where the goal is to reach the top of a mountain by leveraging gravitational energy additional to the car's acceleration \cite{singh1996reinforcement}. We consider the state space $\mathcal{S} = \mathcal{D}_{\mathrm{pos}} \times \mathcal{D}_{\mathrm{vel}} \subset \mathbb{R}^2$, where $\mathcal{D}_{\mathrm{pos}} = [-1.2, 0.6]$, $\mathcal{D}_{\mathrm{vel}} = [-0.07, 0.07]$ and the action space $\A = [-1, 1] \subset \R$. The target set is defined as $\mathcal{T} = [0.45, \infty) \times \mathbb{R}$ and the deterministic dynamics\footnote{We can think of the transition density defined in \eqref{eq: state-action transition density} as a Dirac delta distribution $p(s', s, a) = \delta(s' - h(s, a)).$} is described via the function $h:\S \times \A \to \S$, given by
\begin{align}
\begin{split}
v' &= v + 0.0015 \, a - 0.0025 \cos(3 x), \\  
x' &= x + v', 
\end{split}
\end{align}
where the state variable $s = (x, v)^\top$ has a position and a velocity part. Compared to the typical problem, we consider trajectories that only stop when reaching the target set and make the problem slightly harder as we aim for reaching this goal quickly by integrating the runtime into the reward function,
\begin{equation}
\label{eq: reward function mountain car}
r(s, a) \coloneqq \left\{
\begin{array}{cl}
-1 - 0.1 a^2 & \text{if} \quad s \notin \mathcal{T}, \\
0 & \text{if} \quad s \in \mathcal{T}.
\end{array}
\right.
\end{equation}

We consider a Gaussian stochastic policy whose mean and covariance are given by a feed-forward neural network with $3$ layers and $32$ hidden units per layer. We refer to \Cref{app: mountain car details} for further details. In \Cref{fig: mountain car continuous} we compare the three different policy gradient formulas and see that the ones incorporating the expected stopping time perform significantly better. This can be explained with the effective learning rate of the biased method, which increases significantly during the optimization due to the change of the expected hitting times (from $N \approx 10^4$ to $N \approx 10^2$). We highlight that this behavior is problem-specific and cannot be known a priori.

\subsection{Two-joint robot arm (reacher)}
\label{sec: reacher}

\begin{figure*}[h]
\centering
\includegraphics[width=0.3\textwidth]{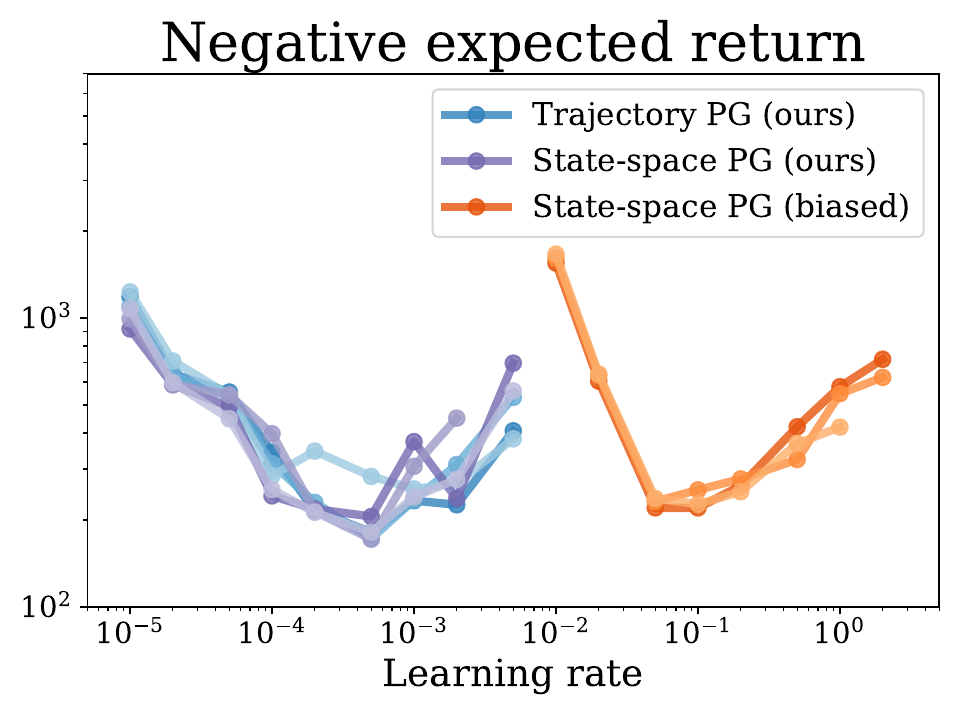}
\includegraphics[width=0.3\textwidth]{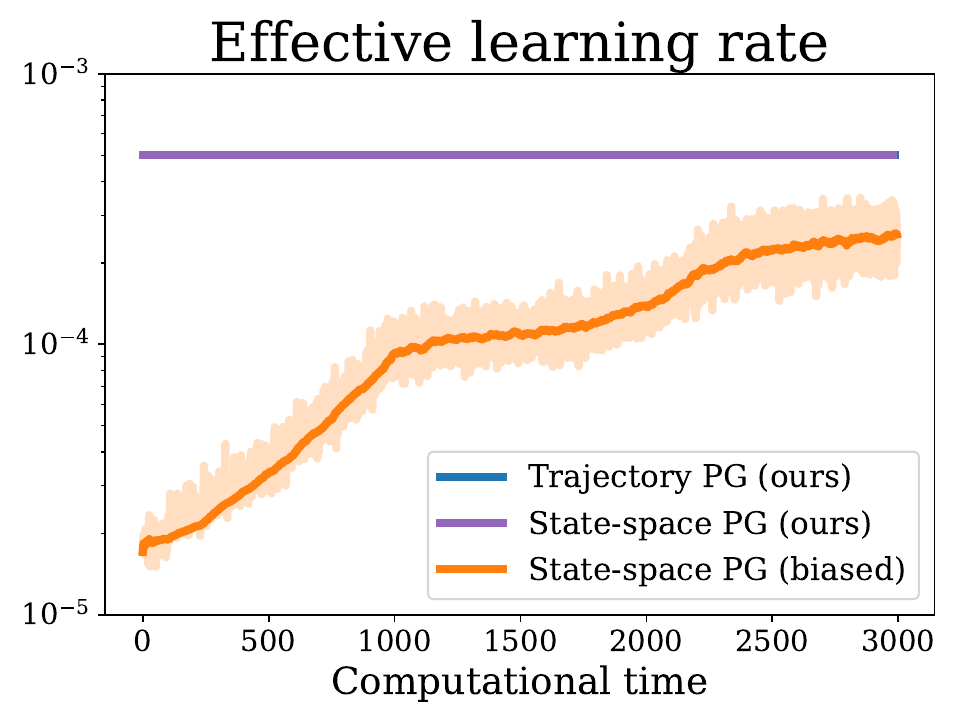}
\includegraphics[width=0.3\textwidth]{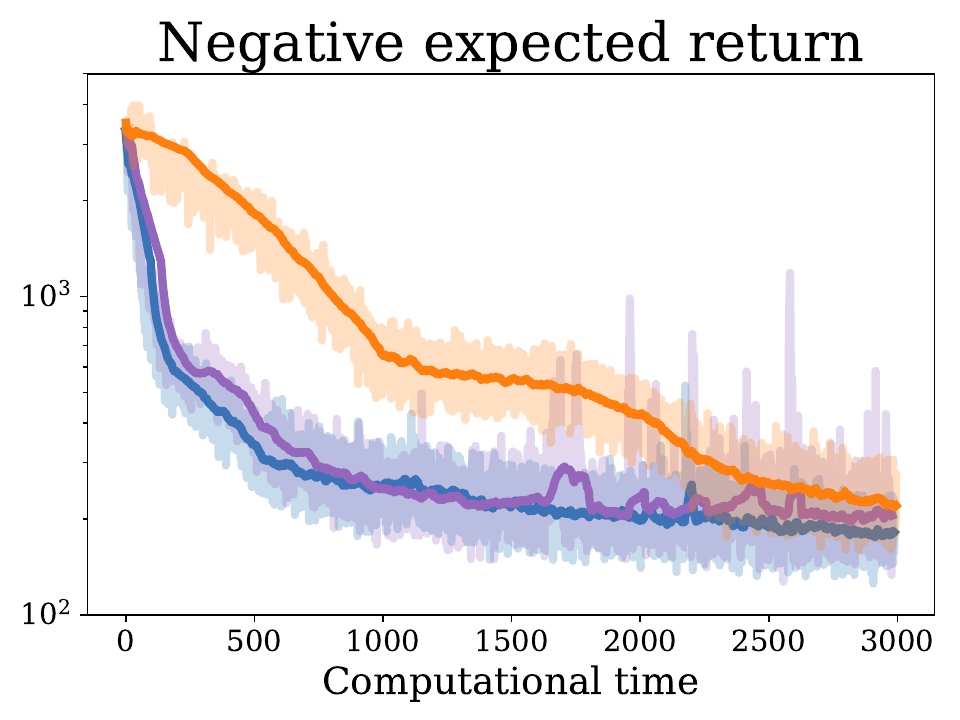}
\caption{We display the performance of the three different policy gradients (PG) on the reacher problem described in \Cref{sec: reacher}. As before, the left plot shows the performance w.r.t. to different learning rates, from which we choose the best learning rate for each method. In the second plot we can see that the effective learning rate for the biased state-space PG, which ignores the expected hitting time, is rather small at the beginning and increases over the course of the optimization. This then results in slower convergence to the optimal performance, as can be seen in the plot on the right-hand side.}
\label{fig: episodic reacher}
\end{figure*}

The reacher environment contains a two-joint robot arm whose goal is to move its fingertip close to the target. It operates on continuous state and action spaces $\mathcal{S} = [-1, 1]^4 \times \mathbb{R}^6 \subset \mathbb{R}^{10}$, $\mathcal{A} = [-1, 1]^2 \subset \mathbb{R}^2$. The dynamics is deterministic and the initial state is sampled, see \citet{towers_gymnasium_2024} for details. Since the original reacher environment is posed for infinite time horizons, we define the target set such that trajectories terminate if the positions $s_9, s_{10}$ of the fingertip are near the target and the angular velocities $s_7, s_8$ of the two arms are low enough,
\begin{equation}
\label{eq: reacher target set}
\mathcal{T} = \{ s \in \mathcal{S} : \| (s_7, s_8) \| \leq 2, \| (s_9, s_{10}) \| \leq 0.05 \}.
\end{equation}
We consider the (slightly modified) reward function
\begin{equation}
\label{eq: reward function reacher}
r(s, a) \coloneqq \left\{
\begin{array}{cl}
-1 - \omega_{\mathrm{control}} \|a\|^2 & \text{if} \quad s \notin \mathcal{T}, \\
0 & \text{if} \quad s \in \mathcal{T},
\end{array}
\right.
\end{equation}
where $\omega_{\mathrm{control}} = 0.1$, again including a penalty for long runtimes as to motivate the robot to operate quickly, however, with small actions. We again consider a Gaussian stochastic policy, see \Cref{app: reacher details} for details. In \Cref{fig: episodic reacher} we can see that our PG formulas lead to faster convergence compared to the classical (biased) formula.

\subsection{Importance sampling of hitting times in molecular dynamics}
\label{sec: double well numerics}

\begin{figure*}[h]
\centering
\includegraphics[width=0.3\textwidth]{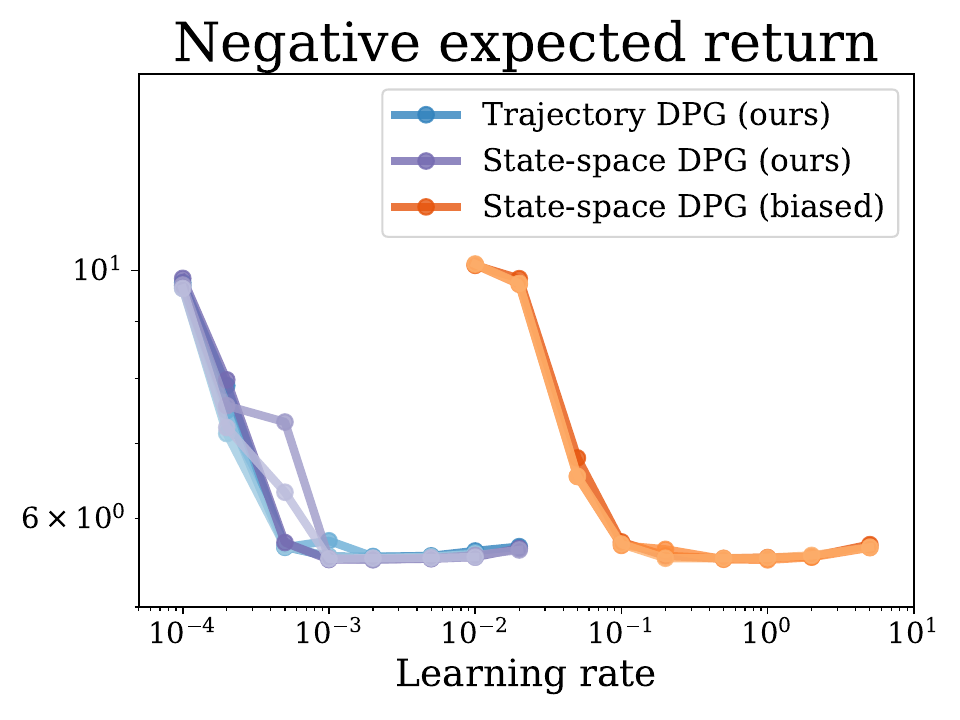}
\includegraphics[width=0.3\textwidth]{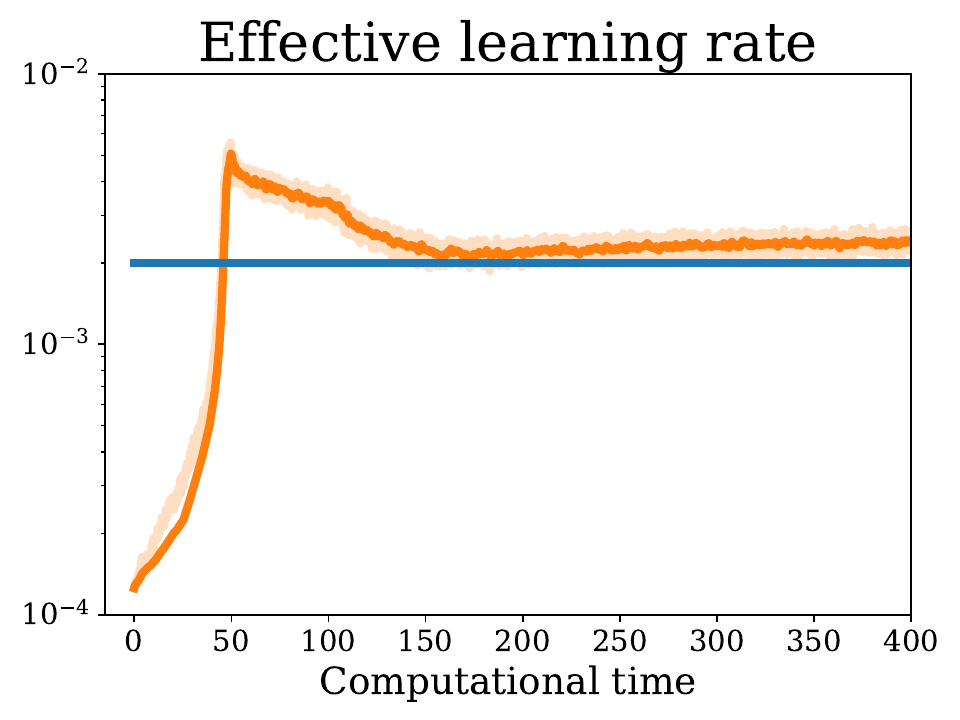}
\includegraphics[width=0.3\textwidth]{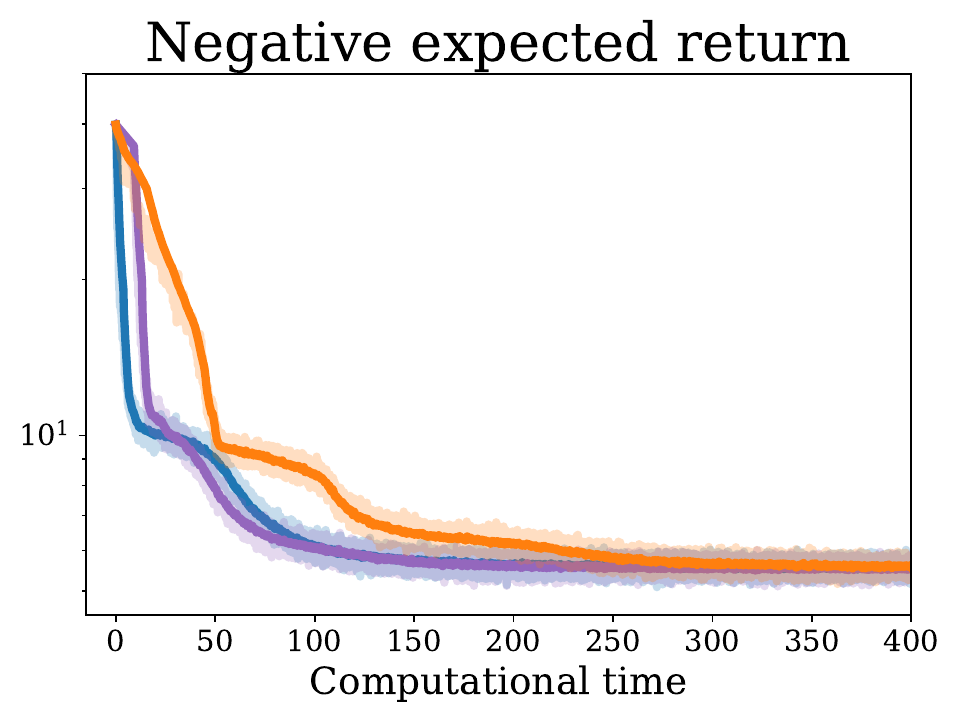}
\caption{We consider importance sampling of hitting times in molecular dynamics as described in \Cref{sec: double well numerics} and compare the three different deterministic policy gradients (DPG). We display the performance depending on the learning rate as well as the effective learning rate and the negative expected return over the course of the optimization. We see that biased state-spaced PG converges more slowly compared to our DPG methods.}
\label{fig: dw20d asym det}
\end{figure*}

Finally, we consider a problem that is relevant for the estimation of rare events in the context of molecular dynamics \cite{hartmann2017variational}. The goal is to identify a deterministic policy that allows for the effective simulation of reaching a target set $\mathcal{T} \subset \mathcal{S}$ by shortening trajectories and reducing the variance of corresponding Monte Carlo estimators \cite{hartmann2012efficient}. To this end, we consider
$d$-dimensional state and action spaces $\mathcal{S} = \mathcal{A} = \mathbb{R}^d$, a Gaussian transition density
defined by
\begin{align}
\label{eq: ol transition density}
p(\cdot, s, a) = \mathcal{N}\left(s + \left(a -\nabla U(s)\right)\Delta t, \sigma^2 \Delta t \operatorname{Id} \right),
\end{align}
where $U : \R^d \to \R$ is a so-called potential function that is given by the specific problem and $\Delta t > 0$ is a step size. The reward function is given by
\begin{equation}
\label{eq: reward function is mgf}
r(s, a) = \left\{
\begin{array}{ll}
- \Delta t - \frac{1}{2} \|a\|^2 \Delta t  & \text{if} \quad s \notin \mathcal{T}, \\
0 & \text{if} \quad s \in \mathcal{T} .
\end{array}
\right.
\end{equation}
Intuitively, \eqref{eq: reward function is mgf} can be interpreted as aiming to minimize the runtime $N$, while keeping the actions small. This problem has already been studied with reinforcement learning methods in \citet{quer_connecting_2024}. In our experiments we consider $U(s) = \sum_{i=1}^d \alpha_i(s_i^2 - 1)^2$, where $\alpha_i$ quantifies the amount of metastability in the $i$-th dimension, as well as $\sigma = \sqrt{2}$ and $\Delta t = 10^{-2}$. For the gradient computations we rely on the model-based formulas stated in \Cref{cor: model-based deterministic policy gradient}.

We consider $d=20$ and choose $\alpha_1 = 5$, $\alpha_2 = 2$, $\alpha_i = 0.5$, for $i=\{3, \dots, 20\}$ as well as $\mathcal{T} = \widetilde{\mathcal{T}} \times \mathbb{R}^{d-2}$, where
\begin{equation}
\label{eq: def target set double well 2d}
    \widetilde{\mathcal{T}}  = \{ (s_1, s_2) \in \mathcal{S} : s_1, s_2 > 0, U(s) \leq 0.25 \},
\end{equation}
see also \Cref{fig: trajectory vs. state-space perspective} for an illustration. As described above, we first search for a suitable learning rate for each approach. In the first panel in \Cref{fig: dw20d asym det} we plot the performance dependence on the learning rate. As expected, the trajectory and state-space based gradient computations yield similar results -- they only differ due to statistical effects. The scaled state-space gradient, on the other hand, yields very different results. For each approach, we choose the optimal learning rate. As noted in \Cref{rem: interpretation as effective learning rate}, the scaled state-space approach has an effective learning rate that depends on the current expected stopping time. In the second panel we plot this quantity over the course of the optimization. Since the stopping times get shorter during the optimization, the effective learning rate increases. Consequently, this leads to slower optimization, since especially in the beginning the effective learning rate is comparatively small, see the third panel. In fact, the formulas that incorporate the stopping time (and thus provide the correct gradient) converge roughly twice as fast.

\section{Conclusion and outlook}
\label{sec: conclusion}

In this work, we have suggested a theoretical framework for reinforcement learning problems with random time horizons. In our experiments, we have seen that the related (mostly novel) policy gradient formulas can lead to improved and accelerated performance. Crucially, we can explain those performance improvements with the formulas we derived. We anticipate that our framework will allow to design advanced optimization algorithms that cleverly integrate the random time dependency, potentially combined with off-policy algorithms, leading to further improvements on state-of-the-art problems in the future. In particular, we envision integrations of our corrections for random time horizons into methods such as \textit{trust region policy optimization} \cite{schulman2015trust} and \textit{proximal policy optimization} \cite{schulman2017proximal}. Furthermore, we think that the connection of reinforcement learning to (stochastic) optimal control problems -- studying e.g. continuous-time perspectives \cite{wang2020reinforcement}, path space measures \cite{nusken2021solving} or diffusion models \cite{berner2022optimal} -- will be fruitful both for further theoretical insights and novel numerical algorithms.

\newpage

\section*{Acknowledgments}
The first author would like to thank A. Sikorski for his helpful insights on understanding the infinite horizon setting as a random time problem and previous fruitful discussions. 
This research has been partially funded by Deutsche Forschungsgemeinschaft~(DFG) through grant~CRC~1114 (Project~No.~235221301) and under Germany’s Excellence Strategy MATH+: Berlin~Mathematics Research Center (EXC~2046/1, Project~No.~390685689).

\section*{Impact statement}
The goal of this work is to advance the theoretical understanding of the field of reinforcement learning, leading to improvements in applications as well. While there are potential societal consequences of our work in principle, we do not see any concrete issues and thus believe that we do not specifically need to highlight any.

\bibliography{references}

\begin{thebibliography}{31}
\providecommand{\natexlab}[1]{#1}
\providecommand{\url}[1]{\texttt{#1}}
\expandafter\ifx\csname urlstyle\endcsname\relax
  \providecommand{\doi}[1]{doi: #1}\else
  \providecommand{\doi}{doi: \begingroup \urlstyle{rm}\Url}\fi

\bibitem[Agarwal et~al.(2022)Agarwal, Jiang, Kakade, and Sun]{agarwal_reinforcement_2022}
Agarwal, A., Jiang, N., Kakade, S.~M., and Sun, W.
\newblock Reinforcement {Learning}: {Theory} and {Algorithms}.
\newblock 2022.
\newblock URL \url{https://rltheorybook.github.io/}.

\bibitem[Berner et~al.(2024)Berner, Richter, and Ullrich]{berner2022optimal}
Berner, J., Richter, L., and Ullrich, K.
\newblock An optimal control perspective on diffusion-based generative modeling.
\newblock \emph{Transactions on Machine Learning Research}, 2024.

\bibitem[Bojun(2020)]{bojun2020steady}
Bojun, H.
\newblock Steady state analysis of episodic reinforcement learning.
\newblock In \emph{Advances in {Neural} {Information} {Processing} {Systems}}, volume~33, pp.\  9335--9345, 2020.

\bibitem[Chen et~al.(2024)Chen, Ross, and Youssef]{chen2024finite}
Chen, S.-W., Ross, K., and Youssef, P.
\newblock Finite-sample analysis of the {M}onte {C}arlo exploring starts algorithm for reinforcement learning.
\newblock \emph{arXiv preprint arXiv:2410.02994}, 2024.

\bibitem[Degris et~al.(2012)Degris, White, and Sutton]{degris2012off}
Degris, T., White, M., and Sutton, R.~S.
\newblock Off-policy actor-critic.
\newblock In \emph{International Conference on Machine Learning}. PMLR, 2012.

\bibitem[Derman(1970)]{derman1970finite}
Derman, C.
\newblock \emph{Finite state Markovian decision processes}.
\newblock Academic Press, Inc., 1970.

\bibitem[Hartmann \& Sch{\"u}tte(2012)Hartmann and Sch{\"u}tte]{hartmann2012efficient}
Hartmann, C. and Sch{\"u}tte, C.
\newblock Efficient rare event simulation by optimal nonequilibrium forcing.
\newblock \emph{Journal of Statistical Mechanics: Theory and Experiment}, 2012\penalty0 (11):\penalty0 P11004, 2012.

\bibitem[Hartmann et~al.(2017)Hartmann, Richter, Sch{\"u}tte, and Zhang]{hartmann2017variational}
Hartmann, C., Richter, L., Sch{\"u}tte, C., and Zhang, W.
\newblock Variational characterization of free energy: Theory and algorithms.
\newblock \emph{Entropy}, 19\penalty0 (11):\penalty0 626, 2017.

\bibitem[Lie(2021)]{lie2021fr}
Lie, H.~C.
\newblock Fr\'echet derivatives of expected functionals of solutions to stochastic differential equations.
\newblock \emph{arXiv preprint arXiv:2106.09149}, 2021.

\bibitem[Lillicrap(2016)]{lillicrap2015continuous}
Lillicrap, T.
\newblock Continuous control with deep reinforcement learning.
\newblock In \emph{International Conference on Learning Representations}, 2016.

\bibitem[Mandal et~al.(2023)Mandal, Radanovic, Gan, Singla, and Majumdar]{mandal_online_2023}
Mandal, D., Radanovic, G., Gan, J., Singla, A., and Majumdar, R.
\newblock Online reinforcement learning with uncertain episode lengths.
\newblock \emph{Proceedings of the AAAI Conference on Artificial Intelligence}, 37\penalty0 (7):\penalty0 9064--9071, 2023.

\bibitem[Mnih et~al.(2013)Mnih, Kavukcuoglu, Silver, Graves, Antonoglou, Wierstra, and Riedmiller]{mnih2013playing}
Mnih, V., Kavukcuoglu, K., Silver, D., Graves, A., Antonoglou, I., Wierstra, D., and Riedmiller, M.
\newblock Playing {A}tari with deep reinforcement learning.
\newblock \emph{arXiv preprint arXiv:1312.5602}, 2013.

\bibitem[Nota \& Thomas(2020)Nota and Thomas]{nota2019policy}
Nota, C. and Thomas, P.~S.
\newblock Is the policy gradient a gradient?
\newblock In \emph{Proceedings of the 19th International Conference on Autonomous Agents and MultiAgent Systems}, pp.\  939--947, 2020.

\bibitem[N{\"u}sken \& Richter(2021)N{\"u}sken and Richter]{nusken2021solving}
N{\"u}sken, N. and Richter, L.
\newblock Solving high-dimensional {H}amilton--{J}acobi--{B}ellman {PDE}s using neural networks: perspectives from the theory of controlled diffusions and measures on path space.
\newblock \emph{Partial differential equations and applications}, 2\penalty0 (4):\penalty0 48, 2021.

\bibitem[Paternain(2018)]{paternain2018stochastic}
Paternain, S.
\newblock \emph{Stochastic control foundations of autonomous behavior}.
\newblock University of Pennsylvania, 2018.

\bibitem[Pham(2009)]{pham2009continuous}
Pham, H.
\newblock \emph{Continuous-time stochastic control and optimization with financial applications}, volume~61.
\newblock Springer Science \& Business Media, 2009.

\bibitem[Puterman(2014)]{puterman2014markov}
Puterman, M.~L.
\newblock \emph{Markov decision processes: discrete stochastic dynamic programming}.
\newblock John Wiley \& Sons, 2014.

\bibitem[Quer \& Ribera~Borrell(2024)Quer and Ribera~Borrell]{quer_connecting_2024}
Quer, J. and Ribera~Borrell, E.
\newblock Connecting stochastic optimal control and reinforcement learning.
\newblock \emph{Journal of Mathematical Physics}, 65\penalty0 (8):\penalty0 083512, 2024.

\bibitem[Ribera~Borrell et~al.(2024)Ribera~Borrell, Quer, Richter, and Sch{\"u}tte]{ribera2024improving}
Ribera~Borrell, E., Quer, J., Richter, L., and Sch{\"u}tte, C.
\newblock Improving control based importance sampling strategies for metastable diffusions via adapted metadynamics.
\newblock \emph{SIAM Journal on Scientific Computing}, 46\penalty0 (2):\penalty0 S298--S323, 2024.

\bibitem[Schulman et~al.(2015)Schulman, Levine, Abbeel, Jordan, and Moritz]{schulman2015trust}
Schulman, J., Levine, S., Abbeel, P., Jordan, M., and Moritz, P.
\newblock Trust region policy optimization.
\newblock In \emph{International Conference on Machine Learning}, pp.\  1889--1897. PMLR, 2015.

\bibitem[Schulman et~al.(2017)Schulman, Wolski, Dhariwal, Radford, and Klimov]{schulman2017proximal}
Schulman, J., Wolski, F., Dhariwal, P., Radford, A., and Klimov, O.
\newblock Proximal policy optimization algorithms.
\newblock \emph{arXiv preprint arXiv:1707.06347}, 2017.

\bibitem[Sch{\"u}tte et~al.(2023)Sch{\"u}tte, Klus, and Hartmann]{schutte2023overcoming}
Sch{\"u}tte, C., Klus, S., and Hartmann, C.
\newblock Overcoming the timescale barrier in molecular dynamics: Transfer operators, variational principles and machine learning.
\newblock \emph{Acta Numerica}, 32:\penalty0 517--673, 2023.

\bibitem[Silver et~al.(2014)Silver, Lever, Heess, Degris, Wierstra, and Riedmiller]{silver2014deterministic}
Silver, D., Lever, G., Heess, N., Degris, T., Wierstra, D., and Riedmiller, M.
\newblock Deterministic policy gradient algorithms.
\newblock In \emph{Proceedings of the 31st {International} {Conference} on {Machine} {Learning}}, pp.\  387--395. PMLR, 2014.

\bibitem[Singh \& Sutton(1996)Singh and Sutton]{singh1996reinforcement}
Singh, S.~P. and Sutton, R.~S.
\newblock Reinforcement learning with replacing eligibility traces.
\newblock \emph{Machine Learning}, 22\penalty0 (1):\penalty0 123--158, 1996.

\bibitem[Sutton \& Barto(2018)Sutton and Barto]{sutton2018reinforcement}
Sutton, R.~S. and Barto, A.~G.
\newblock Reinforcement learning: An introduction.
\newblock \emph{A Bradford Book}, 2018.

\bibitem[Sutton et~al.(1999)Sutton, McAllester, Singh, and Mansour]{sutton1999policy}
Sutton, R.~S., McAllester, D., Singh, S., and Mansour, Y.
\newblock Policy gradient methods for reinforcement learning with function approximation.
\newblock \emph{Advances in neural information processing systems}, 12, 1999.

\bibitem[Towers et~al.(2024)Towers, Kwiatkowski, Terry, Balis, De~Cola, Deleu, Goulão, Kallinteris, Krimmel, KG, Perez-Vicente, Pierré, Schulhoff, Tai, Tan, and Younis]{towers_gymnasium_2024}
Towers, M., Kwiatkowski, A., Terry, J., Balis, J.~U., De~Cola, G., Deleu, T., Goulão, M., Kallinteris, A., Krimmel, M., KG, A., Perez-Vicente, R., Pierré, A., Schulhoff, S., Tai, J.~J., Tan, H., and Younis, O.~G.
\newblock Gymnasium: {A} standard interface for reinforcement learning environments, 2024.

\bibitem[Wang et~al.(2020)Wang, Zariphopoulou, and Zhou]{wang2020reinforcement}
Wang, H., Zariphopoulou, T., and Zhou, X.~Y.
\newblock Reinforcement learning in continuous time and space: A stochastic control approach.
\newblock \emph{Journal of Machine Learning Research}, 21\penalty0 (198):\penalty0 1--34, 2020.

\bibitem[White(2017)]{white_unifying_2017}
White, M.
\newblock Unifying task specification in reinforcement learning.
\newblock In \emph{Proceedings of the 34th {International} {Conference} on {Machine} {Learning}}, volume~70, pp.\  3742--3750. PMLR, 2017.

\bibitem[Zhang et~al.(2020)Zhang, Koppel, Zhu, and Basar]{zhang2020global}
Zhang, K., Koppel, A., Zhu, H., and Basar, T.
\newblock Global convergence of policy gradient methods to (almost) locally optimal policies.
\newblock \emph{SIAM Journal on Control and Optimization}, 58\penalty0 (6):\penalty0 3586--3612, 2020.

\bibitem[Zhou et~al.(2021)Zhou, Han, and Lu]{zhou2021actor}
Zhou, M., Han, J., and Lu, J.
\newblock Actor-critic method for high dimensional static {H}amilton--{J}acobi--{B}ellman partial differential equations based on neural networks.
\newblock \emph{SIAM Journal on Scientific Computing}, 43\penalty0 (6):\penalty0 A4043--A4066, 2021.

\end{thebibliography}
\bibliographystyle{icml2025}

\newpage
\appendix
\onecolumn

\section{Notation and setting}
\label{app: notation}

As stated in \Cref{sec: reinforcement learning with random time horizons}, we denote with $\mathcal{M} = (\mathcal{S}, \mathcal{A}, r_n, \rho_0, p)$ a time-discrete \textit{Markov decision process}, where $\mathcal{S} \subset \mathbb{R}^{d_s}$ is the state space, $\mathcal{A} \subset \mathbb{R}^{d_a}$ is the action space, $r_n$ is the reward function, $\rho_0$ is the initial probability density of the states (defined in \eqref{eq: initial density}) and $p$ is the time-homogeneous (state-action) transition density (defined in \eqref{eq: state-action transition density}). Further, we consider the stochastic policy $\pi$ or the deterministic policy $\mu$. The dynamics of the Markov decision process is defined via $S_0 \sim \rho_0$ as well as $A_n \sim \pi(S_n, \cdot)$  and $S_{n+1} \sim p(\cdot, S_{n}, A_{n})$ for each $n \in \{0, \dots, N-1\}$, and we call the resulting set $\{S_0, A_0, S_1, A_1, \dots, S_N\}$ a state-action trajectory and $\{S_0, \dots, S_N\}$ a (state) trajectory. 

For the expectation operator we write $\E_\pi$ if the random variables in the expectation depend on the policy $\pi$, however note that the randomness not only depends on $\pi$, but also on the initial density $\rho_0$ and the transition density $p$, i.e. for a function $\varphi : (\S \times A)^N \times \S \to \S$ and random variables $S_0, A_0, \dots, S_N$, and for fixed $N$, we write
\begin{subequations}
\begin{align}
    &\E_{\pi}\left[ \varphi(S_0, A_0, \dots, S_N) \right] = \E_{\rho_0, p, \pi}\left[ \varphi(S_0, A_0,  \dots, S_N) \right] \\
    &\qquad \qquad = \int_{(\S \times \A)^{N} \times \S} \rho_0(s_0) \left(\prod_{n=0}^{N-1}\pi(s_n, a_n) p(s_{n+1}, s_n, a_n) \right) \mathrm d s_0 \mathrm d a_0 \dots \mathrm d s_{N-1} \mathrm d a_{N-1} \mathrm d s_N.
\end{align}
\end{subequations}

In our work, the stopping $N$ is typically considered to be random, so the expectation is also over $N$, i.e.

\begin{subequations}
\begin{align}
    &\E_{\pi}\left[ \varphi(S_0, A_0, \dots, S_N) \right] = \E_{\rho_0, p, \pi,N}\left[ \varphi(S_0, A_0,  \dots, S_N) \right] \\
    &\qquad = \sum_{m=0}^\infty \P(N = m) \int_{(\S \times \A)^{m} \times \S} \rho_0(s_0) \left(\prod_{n=0}^{m-1}\pi(s_n, a_n) p(s_{n+1}, s_n, a_n) \right) \mathrm d s_0 \mathrm d a_0 \dots \mathrm d s_{m-1} \mathrm d a_{m-1} \mathrm d s_m,
\end{align}
\end{subequations}

where $\P(N = m)$ is the probability of $N$ having the value $m$ 
and we use the convention $\prod_{n=0}^{-1} (\cdot)_n = 1$. This probability is typically dependent on the dynamics, and therefore the policy, and is thus unknown. For deterministic policies we write $\E_\mu$ (even though $\mu : \S \to \A$ is a deterministic function). Also, note that for state-space expectations stated e.g. in \eqref{eq: state-space expected return} we write $\E_{\substack{s \sim \rho^\pi \\ a \sim \pi(s, \cdot)}}[\cdot]$ to denote $\E_{s \sim \rho^\pi} [ \E_{a \sim \pi(s, \cdot)}[\cdot]]$. 

Further, we assume that $\pi$ or $\mu$ are parametrized by the parameter vector $\theta \in \R^p$, and we may interchangeably write $\pi = \pi_\theta$ or $\mu = \mu_\theta$ for ease of notation. We write $\nabla_\theta$ to denote the gradient w.r.t. the parameter vector $\theta$ and for any $s \in \S$ we write $\nabla_\theta \mu_\theta(s)$ to denote the Jacobian matrix of $\mu_{\theta}(s)$ w.r.t. $\theta$.

\section{Additional background on reinforcement learning}
\label{app: background on RL}

Let us consider the (state-action) \emph{reward function}\footnote{In principle, the reward function can also depend on the following next state.} $r_n: \mathcal{S} \times \mathcal{A} \rightarrow \mathbb{R}$ and recall that it provides the signal that is received after being in state $S_n \in \mathcal{S}$ and having taken action $A_n \in \mathcal{A}$ at time $n \in \mathbb{N}$. In our work we consider a time-independent reward function and we denote it by $r$. We define the \emph{return (from step $n$ onward)} by
\begin{equation}
    G_n = \sum_{m=n}^{N} r(S_m, A_m)
\end{equation}
and denote $G_0$ by the \emph{(initial) return}. We define the \emph{expected (initial) return} (sometimes also called objective function) as
\begin{equation}
    J(\pi) = \E_{\pi}\left[\sum_{n=0}^{N} r(S_n, A_n) \right].
\end{equation}
We note that starting at $n = 0$ holds without loss of generality and one could also start at any other time, since the problem is time-autonomous. Further, we can define the \textit{value function} (or \textit{return-to-go}) as the expected return conditioned on starting at state $s \in \mathcal{S}$, 
\begin{equation}
    V^{\pi}(s) := \E_\pi \left[ \sum_{n=0}^N r(S_n, A_n) \Big| S_0 = s \right].
\end{equation}
 We can further condition on applying the action $a \in \mathcal{A}$, and define the so-called \textit{Q-value function} as
\begin{equation}
    Q^{\pi}(s, a) := \E_\pi \left[ \sum_{n=0}^N r(S_n, A_n) \Big| S_0 = s, A_0 = a \right].
\end{equation}
We note that it holds $J(\pi) = \E_{s \sim \rho_0}\left[V^{\pi}(s) \right]$ and $V^{\pi}(s) = \E_{a \sim \pi (s, \cdot)}\left[Q^{\pi}(s, a) \right]$. For all expressions we have the Bellman equation\footnote{Note that for the infinite time horizon case the value function is scaled by the discount factor $\gamma$.}, e.g.
\begin{subequations}
\begin{align}
    J(\pi) &= \E_\pi\left[ r(S_0, A_0) + \sum_{n=1}^N r(S_n, A_n) \right] \\
    &= \E_\pi\left[ r(S_0, A_0) + \E_\pi\left[\sum_{n=1}^N r(S_n, A_n) \Big| S_1 \right]\right]\\
    &= \E_\pi\left[ r(S_0, A_0) + V^{\pi}(S_1) \right].
\end{align}
\end{subequations}

Analogously, it holds for all $s \in \mathcal{T}^c$
\begin{subequations}
\label{eq: Bellman equation V stochastic policy}
\begin{align}
V^{\pi}(s) &= \E_\pi\left[ r(S_0, A_0) + V^{\pi}(S_1) | S_0 = s \right], \\
&= \int_\A \pi(s, a) \left( r(s, a) + \int_\mathcal{S} p(s', s, a) V^{\pi}(s') \mathrm{d}s' \right) \mathrm{d}a, 
\end{align}
\end{subequations}
and 
\begin{subequations}
\label{eq: Bellman equation Q stochastic policy}
\begin{align}
Q^{\pi}(s, a) &=  r(s, a) + \E_\pi\left[Q^{\pi}(S_1, A_1) | S_0 = s, A_0 = a \right] \\
&= r(s, a) + \int_\mathcal{S} p(s', s, a) \int_\A \pi(s', a') Q^{\pi}(s', a') \mathrm{d}a' \mathrm{d}s' . 
\end{align}
\end{subequations}

For deterministic policies the above equations reduce to 
\begin{subequations}
\label{eq: Bellman equation V deterministic policy}
\begin{align}
V^{\mu}(s) &= r(s, \mu(s)) + \E_\mu\left[V^{\mu}(S_1) | S_0 = s \right] \\
&= r(s, \mu(s)) + \int_\mathcal{S} p(s', s, \mu(s)) V^{\mu}(s') \mathrm{d}s', 
\end{align}
\end{subequations}
and
\begin{subequations}
\label{eq: Bellman equation Q deterministic policy}
\begin{align}
Q^{\mu}(s, a) &= r(s, a)  + \E_\mu\left[ Q^{\mu}(S_1, \mu(S_1)) | S_0 = s, A_0 = a \right] \\
&= r(s, a) +  \int_\mathcal{S} p(s', s, a) Q^{\mu}(s', \mu(s')) \mathrm{d}s'. \\
&= r(s, a) +  \int_\mathcal{S} p(s', s, a) V^{\mu}(s') \mathrm{d}s'.
\end{align}
\end{subequations}
Further, note that in control theory the \emph{value function} denotes the \emph{optimal cost-to-go}, i.e. the optimal value of the cost functional w.r.t. all possible controls. In contrast, in reinforcement learning the term \textit{value function} is used for any arbitrary policy. If the policy is optimal it is called $\emph{optimal value function}$.

In practice, one often chooses $\pi = \mathcal{N}(\mu, \sigma^2 \operatorname{Id})$, for which $\mu$ and $\sigma$ are learnable functions. We note that with $\sigma$ approaching the zero function $x \mapsto 0$, i.e. with reducing the stochasticity of the stochastic policy, $\pi$ approaches a Dirac distribution and therefore a deterministic policy, see also \Cref{rem: deterministic as stochastic policy} in the main text and Theorem 2 in \citet{silver2014deterministic}.

\section{Proofs and additional statements}
\label{app: proofs}

\begin{proof}[Proof of \Cref{lem: discounted infinite as random time problem}]

First note the following re-ordering of sums, containing the sequences $\{a_i\}_{i}$ and $\{b_j\}_j$. It holds that
\begin{equation}
\label{eq: sum re-ordering}
\sum_{i=0}^\infty a_i \sum_{j=0}^{i} b_j = \lim_{l \to \infty} \sum_{i=0}^l a_i \sum_{j=0}^{i} b_j = \lim_{l \to \infty} \sum_{j=0}^{l} b_j \sum_{i=j}^l a_i = \sum_{j=0}^\infty b_j \sum_{i=j}^\infty a_i .
\end{equation}

Assuming that $N_\gamma \sim \operatorname{Geom}(1- \gamma)$ with $\gamma \in (0, 1)$, we have that $\P(N_\gamma = m) = \gamma^m(1-\gamma)$ and can compute
\begin{subequations}
\begin{align}
    \E_\pi \left[ \sum_{n=0}^{N_\gamma} r(S_n, A_n) \right] &= \E_{N_\gamma}\left[\E_\pi \left[ \sum_{n=0}^{N_\gamma} r(S_n, A_n) \right] \right] \\
    \label{eq: independent random time}
    &= \sum_{m=0}^\infty \P(N_\gamma = m) \E_\pi \left[ \sum_{n=0}^m r(S_n, A_n) \right]  \\
    \label{eq: sum re-ordering execution}
    &= \E_\pi \left[ \sum_{n=0}^\infty  r(S_n, A_n) (1-\gamma) \sum_{m=n}^\infty \gamma^m \right]  \\
    &= \E_\pi \left[ \sum_{n=0}^\infty \gamma^n r(S_n, A_n) \right],
\end{align}
\end{subequations}
where we used the tower property in the first line, the fact that $N_\gamma$ does not depend on the policy in \eqref{eq: independent random time} and identity \eqref{eq: sum re-ordering} in \eqref{eq: sum re-ordering execution}. We can further check that
\begin{equation}
\P(N_\gamma = \infty) = \lim_{m \to \infty} \P(N_\gamma = m) = \lim_{m \to \infty} \gamma^m(1-\gamma) = 0.
\end{equation}

\end{proof}

\begin{proof}[Proof of \Cref{lem: eta region counting}] Recalling the definitions \eqref{def: n-step function} and \eqref{eq: definition state-space density}, 
\begin{equation}
    \eta^\pi := \sum_{n = 0}^\infty \rho^\pi_n, \qquad \int_\Lambda \rho^\pi_n(s) \mathrm ds = \P_\pi(S_n \in \Lambda),
\end{equation}
we compute
\begin{align}
\int_\Lambda \eta^\pi(s) \mathrm{d}s
&= \sum_{n=0}^\infty  \int_\Lambda \rho^\pi_n(s) \mathrm{d}s = \sum_{n=0}^\infty \mathbb{P}_\pi(S_n \in \Lambda) = \sum_{n=0}^\infty \mathbb{E}_\pi \left[\mathbbm{1}_{\Lambda}(S_n) \right] = \mathbb{E}_\pi \left[ \sum_{n=0}^\infty \mathbbm{1}_{\Lambda}(S_n) \right] = \mathbb{E}_\pi \left[ \sum_{n=0}^N \mathbbm{1}_{\Lambda}(S_n) \right].
\end{align}

Further, we note that choosing $\Lambda = \mathcal{S}$ yields

\begin{equation}
Z^\pi = \int_\mathcal{S} \eta^\pi(s) \mathrm{d}s = \mathbb{E}_\pi \left[ N \right] + 1.
\end{equation}
\end{proof}

\begin{proof}[Proof of \Cref{prop: costs state-space perspective}]

Let us first define the $n$-step transition function $p_n^\pi(s', s)$ via

\begin{equation}
\label{eq: def n-step transition probability}
    \int_{\Lambda} p_n^\pi(s', s) \mathrm d s' = \P_\pi \left(S_n \in \Lambda | S_0 = s \right). 
\end{equation}
Note that we have 
\begin{equation}
\label{eq: n-step transition probability}
    \int_{\S} p_n^\pi(s', s) \mathrm d s' = \P_\pi\left(S_n \in \S | S_0 = s \right)= \P_\pi\left(S_n \in \S, n \le N | S_0 = s \right) = \P_\pi\left(n \le N | S_0 = s \right),
\end{equation}
so it is only a density conditioned on the fact that $n \le N$. We can now compute
\begin{subequations}
\begin{align}
    \E_\pi\left[\sum_{n=0}^N r(S_n, A_n) \right] &= \E_\pi \left[\sum_{n=0}^\infty \mathbbm{1}_{[n, \infty)}(N) r(S_n, A_n) \right] \\ 
    \label{eq: costs finite to infinite sum}
&=  \int_{\S} \int_\S \int_\A \sum_{n=0}^\infty r(s, a)\pi(s, a) p_n^\pi(s, \bar{s}) \rho_0(\bar{s}) \, \mathrm d \bar{s} \, \mathrm d s \, \mathrm d a \\
    &= \int_{\S} \int_\A  r(s, a)\pi(s, a) \sum_{n=0}^\infty \int_\S  p_n^\pi(s, \bar{s}) \rho_0(\bar{s}) \, \mathrm d \bar{s} \, \mathrm d s \, \mathrm d a  \\
    &= \int_{\S} \int_\A  r(s, a)\pi(s, a) \sum_{n=0}^\infty  \rho_n^\pi(s) \, \mathrm d s \, \mathrm d a \\
    &= \int_{\S} \int_\A  r(s, a)\pi(s, a) \eta^\pi(s) \, \mathrm d s \, \mathrm d a  \\
    &= \E_\pi[N + 1] \E_{\substack{s \sim \rho^\pi \\ a \sim \pi(s, \cdot)}} \left[ r(s, a) \right],
\end{align}
\end{subequations}

where in the last line we used
\begin{equation}
    \eta^\pi = \rho^\pi  \int_{\mathcal{S}} \eta^\pi(s) \mathrm ds =  \rho^\pi \, \E_{\pi}[N + 1]
\end{equation}
via definition \eqref{eq: definition state-space density} and \Cref{lem: eta region counting}. Note that the sum in \eqref{eq: costs finite to infinite sum} switches from a sum until the (potentially random) time $N$ to an infinite sum since the transition probability $p_n^\pi$ defined in \eqref{eq: n-step transition probability} encodes not only the states, but also a potential stopping of corresponding trajectories and thus encodes the potential randomness of $N$.
\end{proof}

\begin{remark}[Alternative derivation of state-space expected return]
\label{rem: alternative derivation of state-space expected return}
    The state-space based formula for the expected return stated in \Cref{prop: costs state-space perspective} can also be derived as follows. The trajectory-based formula
\begin{equation}
    J(\pi) = \E_{\pi}\left[\sum_{n=0}^{N} r_n(S_n, A_n) \right],
\end{equation}
stated in \eqref{eq: expected return}, can be approximated by $K$ trajectories via
\begin{equation}
\label{eq: sample estimator trajectory return}
    \frac{1}{K}\sum_{k=1}^K \sum_{n=0}^{N^{(k)}} r(S_n^{(k)}, A_n^{(k)}),
\end{equation}
noting that each trajectory has a different length, namely $N^{(k)} + 1$ steps in the $k$-th trajectory. We note that we can consider the particles
\begin{equation}
    B := \bigcup_{k=1}^K \bigcup_{n=0}^{N^{(k)}} \left\{ S_n^{(k)}, A_n^{(k)} \right\} 
\end{equation}
as an unbiased sample from the action-state-space density $\rho^\pi(s) \pi(s, a)$, defined in \eqref{eq: definition state-space density}. Let us merge the two indices of the elements of $B$ and write 
\begin{equation}
    B = \bigcup_{m=1}^M  \left\{ \widetilde{S}^{(m)}, \widetilde{A}^{(m)} \right\},
\end{equation}
noting that $M = \sum_{k=1}^K (N^{(k)} + 1)$. We can now  write the sample estimator of the trajectory-based expected return \eqref{eq: sample estimator trajectory return} as
\begin{equation}
\label{eq: sample estimator state-space return}
    \frac{1}{K}\sum_{k=1}^K \sum_{n=0}^{N^{(k)}} r(S_n^{(k)}, A_n^{(k)}) = \frac{1}{K} \sum_{m=1}^M r(\widetilde{S}^{(m)}, \widetilde{A}^{(m)}) = \frac{\frac{M}{K}}{M} \sum_{m=1}^M r(\widetilde{S}^{(m)}, \widetilde{A}^{(m)}).
\end{equation}
Now, noting that $M / K$ converges to $\E_\pi[N + 1]$ for $K \to \infty$ by the law of large numbers, we conclude that \eqref{eq: sample estimator state-space return} converges to the state-space expected return 
\begin{equation}
    J(\pi) = \E_\pi[N + 1] \, \E_{\substack{s \sim \rho^\pi \\ a \sim \pi(s, \cdot)}} \left[ r(s, a) \right],
\end{equation}
as stated in \eqref{eq: state-space expected return}.
\end{remark}

\begin{lemma}[Unrolling for stochastic policies]
\label{lem: unrolling lemma}
For any $l \geq 1$ it holds that
\begin{equation}
\label{eq: unrolling lemma}
\nabla_\theta V^\pi(s)
= \sum_{n=0}^{l-1} \int_\mathcal{S} p_n^\pi(s', s) \int_{\mathcal{A}} \nabla_\theta \pi_\theta(s', a) Q^\pi(s', a) \mathrm{d}a \, \mathrm{d}s' + \int_\mathcal{S} p_l^\pi(s', s) \nabla_\theta V^\pi(s') \mathrm{d}s'.
\end{equation}
\end{lemma}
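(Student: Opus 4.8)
The plan is to prove \eqref{eq: unrolling lemma} by induction on $l$, first establishing the case $l=1$ from the Bellman equations and then iterating it with the help of the Chapman--Kolmogorov relation for the $n$-step kernels $p_n^\pi$ from \eqref{eq: def n-step transition probability}. For the base case I would start from the identity $V^\pi(s) = \int_{\A} \pi_\theta(s,a)\, Q^\pi(s,a)\,\mathrm d a$ (valid for all $s \in \S$, see \Cref{app: background on RL}) and differentiate under the integral sign using the product rule, obtaining
\begin{equation*}
\nabla_\theta V^\pi(s) = \int_{\A} \nabla_\theta \pi_\theta(s,a)\, Q^\pi(s,a)\,\mathrm d a + \int_{\A} \pi_\theta(s,a)\, \nabla_\theta Q^\pi(s,a)\,\mathrm d a .
\end{equation*}
For the second term I would insert the Bellman equation \eqref{eq: Bellman equation Q stochastic policy}, i.e.\ $Q^\pi(s,a) = r(s,a) + \int_{\S} p(s',s,a) V^\pi(s')\,\mathrm d s'$; since $r$ and $p$ do not depend on $\theta$, this yields $\nabla_\theta Q^\pi(s,a) = \int_{\S} p(s',s,a)\, \nabla_\theta V^\pi(s')\,\mathrm d s'$, and after swapping the order of integration and using $p_1^\pi(s',s) = \int_{\A} \pi_\theta(s,a) p(s',s,a)\,\mathrm d a$ (immediate from \eqref{eq: def n-step transition probability} and the Markov property) together with the convention $p_0^\pi(\cdot,s) = \delta(\cdot - s)$, the right-hand side becomes exactly \eqref{eq: unrolling lemma} with $l=1$. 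One bookkeeping point to verify here is consistency at terminal states: for $s \in \mathcal{T}$ one has $p_1^\pi(\cdot,s) \equiv 0$ and $Q^\pi(s,a) = r(s,a)$, so both sides collapse to $\int_{\A} \nabla_\theta \pi_\theta(s,a) r(s,a)\,\mathrm d a$ and the identity still holds.

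For the inductive step, assuming \eqref{eq: unrolling lemma} holds for some $l \ge 1$, I would apply the just-established $l=1$ identity to the function $\nabla_\theta V^\pi(s')$ appearing inside the remainder term $\int_{\S} p_l^\pi(s',s)\, \nabla_\theta V^\pi(s')\,\mathrm d s'$. This splits the remainder into $\int_{\S} p_l^\pi(s',s) \int_{\A} \nabla_\theta \pi_\theta(s',a) Q^\pi(s',a)\,\mathrm d a\,\mathrm d s'$, which is precisely the missing $n=l$ summand, plus $\int_{\S}\int_{\S} p_l^\pi(s',s)\, p_1^\pi(s'',s')\, \nabla_\theta V^\pi(s'')\,\mathrm d s''\,\mathrm d s'$. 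Invoking the Chapman--Kolmogorov identity $\int_{\S} p_l^\pi(s',s)\, p_1^\pi(s'',s')\,\mathrm d s' = p_{l+1}^\pi(s'',s)$ --- which follows from the Markov property and the fact that termination on $\mathcal{T}$ is absorbed into the sub-stochastic kernels $p_n^\pi$ (recall from \eqref{eq: n-step transition probability} that $p_n^\pi(\cdot,s)$ integrates to $\P_\pi(n \le N \mid S_0 = s)$, and that $p_1^\pi$ vanishes on $\mathcal{T}$) --- the last double integral turns into $\int_{\S} p_{l+1}^\pi(s'',s)\, \nabla_\theta V^\pi(s'')\,\mathrm d s''$, completing the induction.

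The main obstacle I anticipate is analytic rather than combinatorial: namely, justifying the exchanges of $\nabla_\theta$ with the integrals above (and, implicitly, with the infinite series defining $V^\pi$ and $Q^\pi$), which requires enough regularity of $\theta \mapsto \pi_\theta$ and a domination bound uniform in $\theta$, ultimately resting on $\P(N < \infty) = 1$ and suitable moment control on $N$. A secondary, more technical issue is making the Dirac-delta convention for $p_0^\pi$ and the treatment of terminal states precise, so that the one-step identity and the Chapman--Kolmogorov relation hold uniformly on $\S$ in the sub-stochastic setting.
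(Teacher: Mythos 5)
Your proposal is correct and follows essentially the same route as the paper's proof: induction on $l$, with the base case obtained by differentiating $V^\pi(s)=\int_{\A}\pi_\theta(s,a)Q^\pi(s,a)\,\mathrm da$ and inserting the Bellman equation, and the inductive step closed via the Chapman--Kolmogorov relation for the sub-stochastic kernels $p_n^\pi$. The additional remarks on terminal states and on justifying the interchange of $\nabla_\theta$ with the integrals are sensible bookkeeping that the paper leaves implicit.
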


\begin{proof}

For the sake of notational convenience, let us define the shorthand notation
\begin{equation}
h_\theta(s, a) := \nabla_\theta \pi_\theta(s, a) Q^\pi(s, a). 
\end{equation}
We prove \Cref{lem: unrolling lemma} via induction over $l$. Let us start by the initial case $l=1$. Recalling that $V^{\pi}(s) = \E_{a \sim \pi (s, \cdot)}\left[Q^{\pi}(s, a) \right]$, we can compute
\begin{subequations}
\label{eq: grad value function}
\begin{align}
\nabla_\theta V^{\pi}(s)
&= \int_\mathcal{A} \Bigl( \nabla_\theta \pi_\theta(s, a) Q^\pi(s, a) +  \pi_\theta(s, a) \nabla_\theta Q^\pi(s, a) \Bigr) \mathrm{d}a \\
&= \int_\mathcal{A} h_\theta(s, a) \mathrm{d} a+ \int_\mathcal{A} \pi_\theta(s, a) \nabla_\theta \biggl( r(s, a) +  \int_\mathcal{S} p(s', s, a) V^\pi(s') \mathrm{d}s' \biggr) \mathrm{d} a \\
&= \int_\mathcal{A} h_\theta(s, a) \mathrm{d} a + \int_\mathcal{A} \pi_\theta(s, a)  \int_\mathcal{S} p(s', s, a) \nabla_\theta V^\pi(s') \mathrm{d}s'\mathrm{d}a \\
&= \int_\mathcal{A} h_\theta(s, a) \mathrm{d} a + \int_\mathcal{S} {p}_1^\pi(s', s) \nabla_\theta V^\pi(s') \mathrm{d} s',
\end{align}
\end{subequations}
where we use the Bellman equation stated in \eqref{eq: Bellman equation Q stochastic policy} and where $p_1^\pi$ is defined in \eqref{eq: def n-step transition probability}. The expression \eqref{eq: grad value function} coincides with \eqref{eq: unrolling lemma} for $l=1$. Let us now assume that \eqref{eq: unrolling lemma} holds for $l-1$. We can compute
\begin{subequations}
\begin{align}
\nabla_\theta V^\pi(s)
&= \sum_{n=0}^{l-2} \int_\mathcal{S} p_n^\pi(s', s) \int_\mathcal{A} h_\theta(s', a) \mathrm{d}a \, \mathrm{d}s' + \int_\mathcal{S} p_{l-1}^\pi(s', s) \nabla_\theta V^\pi(s') \mathrm{d}s' \label{eq: foo} \\ 
&= \sum_{n=0}^{l-2} \int_\mathcal{S} p_n^\pi(s', s) \int_\mathcal{A} h_\theta(s', a) \mathrm{d}a \, \mathrm{d}s' + \int_\mathcal{S} p_{l-1}^\pi(s', s) \Biggl( \int_\mathcal{A} h_\theta(s', a) \mathrm{d} a + \int_\mathcal{S} p_1^\pi(s'', s') \nabla_\theta V^\pi(s'') \mathrm{d} s'' \Biggr) \mathrm{d}s' \\
&= \sum_{n=0}^{l-1} \int_\mathcal{S} p_n^\pi(s', s) \int_\mathcal{A} h_\theta(s', a) \mathrm{d}a \, \mathrm{d}s' + \int_\mathcal{S} \int_\mathcal{S} p_{l-1}^\pi(s', s) p_1^\pi(s'', s') \nabla_\theta V^\pi(s'') \mathrm{d}s' \mathrm{d}s'' \\
\label{eq: unrolling lemma induction}
&= \sum_{n=0}^{l-1} \int_\mathcal{S} p_n^\pi(s', s) \int_\mathcal{A} h_\theta(s', a) \mathrm{d}a \, \mathrm{d}s'  + \int_\mathcal{S} p_{l}^\pi(s'', s) \nabla_\theta V^\pi(s'') \mathrm{d}s'',
\end{align}
\end{subequations}
where we have used \eqref{eq: grad value function} in the second line. Since \eqref{eq: unrolling lemma induction} conincides with \eqref{eq: unrolling lemma} we have proved the statement by induction.
\end{proof}

\begin{proof}[Proof of \Cref{prop: stochastic policy gradient}]

We assume that the gradient of the value function with respect to the policy parameters is bounded, i.e., for any $\theta \in \mathbb{R}^p$ there exists an $L_\theta > 0$ such that
\begin{equation}
\sup\limits_{s \in \mathcal{S}} || \nabla_\theta V^\pi(s)|| \leq L_\theta. 
\end{equation}
Then it holds
\begin{equation}
\lim_{l \rightarrow \infty} \int_\mathcal{S} p_l^\pi(s', s) || \nabla_\theta V^\pi(s')|| \mathrm{d}s'
\le \lim_{l \rightarrow \infty} L_\theta \, \mathbb{P}_\pi(l \le N \mid S_0 = s) = 0,
\end{equation}
by assumption of the (potentially random, but almost surely finite) time $N$. In the limit $l \to \infty$, the expression from \Cref{lem: unrolling lemma} therefore turns into
\begin{equation}
\nabla_\theta V^\pi(s)
= \sum_{n=0}^\infty \int_\mathcal{S} p_n^\pi(s', s) \int_{\mathcal{A}} \nabla_\theta \pi_\theta(s', a) Q^\pi(s', a) \mathrm{d}a \, \mathrm{d}s'.
\end{equation}

Further, we can compute
\begin{subequations}
\label{eq: stochastic policy gradient computation}
\begin{align}
\nabla_\theta J(\pi_\theta)
&= \int_\mathcal{S} \rho_0(s_0) \nabla_\theta V^\pi(s) \mathrm{d} s_0 = \int_\mathcal{S} \rho_0(s_0)  \sum_{n=0}^\infty \int_\mathcal{S} p_n^\pi(s, s_0) \int_{\mathcal{A}} \nabla_\theta \pi_\theta(s, a) Q^\pi(s, a) \mathrm{d}a \, \mathrm{d}s \, \mathrm{d} s_0 \\
&= \int_\mathcal{S} \eta^\pi(s) \int_\mathcal{A}  \nabla_\theta \pi_\theta(s, a) Q^\pi(s, a) \mathrm{d}s \, \mathrm{d}a \\
&= \E[N + 1] \, \int_\mathcal{S} \rho^\pi(s) \int_\mathcal{A} \pi_\theta(s, a) \nabla_\theta \log \pi_\theta(s, a) Q^\pi(s, a) \mathrm{d}s \, \mathrm{d}a  \\
& =\E[N + 1]  \, \mathbb{E}_{\substack{s \sim \rho^\pi \\ a \sim \pi_\theta(s, \cdot)}} \left[ \nabla_\theta \log \pi_\theta(s, a) Q^\pi(s, a) \right], 
\end{align}
\end{subequations}
where we have again used
\begin{equation}
    \eta^\pi = \rho^\pi  \int_{\mathcal{S}} \eta^\pi(s) \mathrm ds =  \rho^\pi \, \E_{\pi}[N + 1]
\end{equation}
via definition \eqref{eq: definition state-space density} and \Cref{lem: eta region counting}.
Finally, by choosing\footnote{Note that \Cref{prop: costs state-space perspective} holds for any arbitrary function $r\colon \S \times \A \rightarrow \mathbb{R}$.} $r(s, a) = \nabla_\theta \log \pi_\theta(s, a) Q^\pi(s, a)$ in \Cref{prop: costs state-space perspective} and using identity \eqref{eq: stochastic policy gradient computation}, we readily get
\begin{equation}
    \nabla_\theta J(\pi_\theta) = \E_\pi \left[ \sum_{n=0}^N \nabla_\theta \log \pi_\theta(S_n, A_n) Q^\pi(S_n, A_n) \right].
\end{equation}
\end{proof}

\begin{proof}[Proof of \Cref{cor: alternative trajectory policy gradient}]
We first note that for an arbitrary function $\varphi :(\S\times \A)^{n+1} \to \R$ it holds
\begin{subequations}
\begin{align}
    \E_\pi\left[\varphi(S_0, A_0, \dots, A_n, S_n) Q^\pi(S_n, A_n) \right] &= \E_\pi\left[\varphi(S_0, A_0, \dots, A_n, S_n) \E_\pi\left[ \sum_{m=n}^N r(S_m, A_m) \Bigg| S_n, A_n  \right] \right] \\ 
    &= \E_\pi\left[\E_\pi\left[\varphi(S_0, A_0, \dots, A_n, S_n)  \sum_{m=n}^N r(S_m, A_m) \right] \right] \\
    &= \E_\pi\left[\varphi(S_0, A_0, \dots, A_n, S_n)  \sum_{m=n}^N r(S_m, A_m) \right],
\end{align}
\end{subequations}
where we used the tower property, the fact that $\varphi(S_0, A_0, \dots, S_n, A_n)$ is $\mathcal{F}_n$-measurable and the Markov property. Equality \eqref{eq: alternative trajectory policy gradient m=n} now follows from choosing $\varphi(S_0, A_0 \dots, S_n, A_n) = \nabla_\theta \log \pi (S_n, A_n)$ and summing over $n$. For this choice we can even show that
\begin{equation}
\E_\pi\left[\nabla_\theta \log \pi (S_n, A_n) Q^\pi(S_n, A_n) \right] = \E_\pi\left[\nabla_\theta \log \pi (S_n, A_n)  \sum_{m=0}^N r(S_m, A_m) \right],
\end{equation}
which implies  \eqref{eq: alternative trajectory policy gradient m=0}. This can be seen by defining the $n$-step transition function $p_n^\pi(s', s, a)$, in analogy to \eqref{eq: def n-step transition probability}, via
\begin{equation}
    \int_{\Lambda} p_n^\pi(s', s, a) \mathrm d s' = \P_\pi\left(S_n \in \Lambda | S_0 = s, A_0 = a \right),
\end{equation}
and noting that for $m < n$ it holds
\begin{subequations}
\begin{align}
    &\E \left[\nabla_\theta \log \pi(S_n, A_n) r(S_m, A_m) \right]\\
    &\qquad =  \int_\S \int_\A  \int_\S \int_\A \rho_m (s_m) \pi(s_m, a_m) r(s_m, a_m) p_{n-m}^\pi(s_n, s_m, a_m) \nabla_\theta \log \pi (s_n, a_n) \pi(s_n, a_n) \mathrm d s_m \mathrm d a_m \mathrm d s_n \mathrm d a_n \\
    &\qquad =  \int_\S \int_\A  \int_\S \rho_m (s_m) \pi(s_m, a_m) r(s_m, a_m) p_{n-m}^\pi(s_n, s_m, a_m) \left( \int_\A  \nabla_\theta \log \pi (s_n, a_n) \pi(s_n, a_n)  \mathrm d a_n \right) \mathrm d s_m \mathrm d a_m \mathrm d s_n \\
    &\qquad  = 0, \nonumber
\end{align}
\end{subequations}
since
\begin{equation}
    \int_\A  \nabla_\theta \log \pi (s_n, a_n) \pi(s_n, a_n)  \mathrm d a_n = \nabla_\theta \int_\A  \pi (s_n, a_n) \mathrm d a_n  = 0.
\end{equation}

We can readily prove \eqref{eq: alternative trajectory policy gradient baseline} in the state-space perspective by noting that
\begin{equation}
\nabla_\theta J(\pi_\theta)
= \int_\mathcal{S} \eta^\pi(s) \int_\mathcal{A}  \nabla_\theta \pi_\theta(s, a) Q^\pi(s, a) \mathrm{d}s \, \mathrm{d}a = \int_\mathcal{S} \eta^\pi(s) \int_\mathcal{A}  \nabla_\theta \pi_\theta(s, a) (Q^\pi(s, a) - b(s)) \mathrm{d}s \, \mathrm{d}a
\end{equation}
since 
\begin{equation}
\int_\mathcal{A} b(s) \nabla_\theta \pi_\theta(s, a) \mathrm{d} a = b(s) \nabla_\theta \int_\mathcal{A} \pi_\theta(s, a) \mathrm{d} a = 0 .
\end{equation}
Then, by choosing $r(s, a) = \nabla_\theta \log \pi_\theta(s, a) (Q^\pi(s, a) - V^\pi(s))$ in \Cref{prop: costs state-space perspective}, we arrive at \eqref{eq: alternative trajectory policy gradient baseline} in the trajectory-based perspective.
\end{proof}

\begin{lemma}[Unrolling for deterministic policies]
\label{lem: unrolling dpg lemma}
For any $l \geq 1$ it holds that
\begin{equation}
\label{eq: unrolling lemma dpg}
\nabla_\theta V^\mu(s)
= \sum_{n=0}^{l-1} \int_\mathcal{S} {p}_n^\mu(s', s) \nabla_\theta \mu_\theta(s')^\top \nabla_a Q^\mu(s', a) \Big|_{a=\mu_\theta(s')} \mathrm{d}s' + \int_\mathcal{S} {p}_l^\mu(s', s) \nabla_\theta V^\mu(s') \mathrm{d}s'.
\end{equation}
\end{lemma}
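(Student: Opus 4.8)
The plan is to prove \Cref{lem: unrolling dpg lemma} by induction on $l$, mirroring the structure of the proof of \Cref{lem: unrolling lemma} for stochastic policies, but working with the deterministic Bellman equations \eqref{eq: Bellman equation V deterministic policy} and \eqref{eq: Bellman equation Q deterministic policy} instead. For the base case $l=1$, I would start from $V^\mu(s) = Q^\mu(s, \mu_\theta(s))$ and differentiate using the chain rule, which produces
\begin{equation}
\nabla_\theta V^\mu(s) = \nabla_\theta \mu_\theta(s)^\top \nabla_a Q^\mu(s, a)\big|_{a=\mu_\theta(s)} + \nabla_\theta Q^\mu(s, a)\big|_{a=\mu_\theta(s)},
\end{equation}
where the second term denotes the derivative of $Q^\mu$ with respect to the \emph{explicit} $\theta$-dependence (through $\mu_\theta$ appearing inside the definition of $Q^\mu$ via the future trajectory), holding the action argument fixed. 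Using the Bellman equation \eqref{eq: Bellman equation Q deterministic policy}, namely $Q^\mu(s, a) = r(s, a) + \int_\mathcal{S} p(s', s, a) V^\mu(s')\,\mathrm{d}s'$, the explicit-$\theta$ derivative of $Q^\mu$ at $a = \mu_\theta(s)$ becomes $\int_\mathcal{S} p(s', s, \mu_\theta(s)) \nabla_\theta V^\mu(s')\,\mathrm{d}s' = \int_\mathcal{S} p_1^\mu(s', s) \nabla_\theta V^\mu(s')\,\mathrm{d}s'$, since $r$ does not depend on $\theta$ and $p_1^\mu$ is exactly the one-step transition density defined as in \eqref{eq: def n-step transition probability}. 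This yields \eqref{eq: unrolling lemma dpg} for $l=1$.

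For the induction step, I would assume \eqref{eq: unrolling lemma dpg} holds for $l-1$, i.e.
\begin{equation}
\nabla_\theta V^\mu(s) = \sum_{n=0}^{l-2} \int_\mathcal{S} p_n^\mu(s', s)\, g_\theta(s')\, \mathrm{d}s' + \int_\mathcal{S} p_{l-1}^\mu(s', s) \nabla_\theta V^\mu(s')\,\mathrm{d}s',
\end{equation}
where I write the shorthand $g_\theta(s') := \nabla_\theta \mu_\theta(s')^\top \nabla_a Q^\mu(s', a)\big|_{a=\mu_\theta(s')}$. Then I substitute the base-case identity $\nabla_\theta V^\mu(s') = g_\theta(s') + \int_\mathcal{S} p_1^\mu(s'', s') \nabla_\theta V^\mu(s'')\,\mathrm{d}s''$ into the last term, use the Chapman--Kolmogorov relation $\int_\mathcal{S} p_{l-1}^\mu(s', s) p_1^\mu(s'', s')\,\mathrm{d}s' = p_l^\mu(s'', s)$, and reindex the sum to absorb the new $g_\theta$ term as the $n = l-1$ summand. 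This gives \eqref{eq: unrolling lemma dpg} for $l$, completing the induction, exactly as in the stochastic case.

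The main obstacle is being careful about the meaning of $\nabla_\theta Q^\mu$ versus $\nabla_a Q^\mu$: since $Q^\mu$ itself depends on $\theta$ (through the policy $\mu_\theta$ used to generate the trajectory after the first step), differentiating $V^\mu(s) = Q^\mu(s, \mu_\theta(s))$ involves both the dependence through the action slot and the dependence through the superscript $\mu$. The clean way to handle this, which I would state explicitly, is to not try to define $\nabla_\theta Q^\mu$ directly but instead to expand $Q^\mu$ one step via its Bellman equation \emph{before} differentiating, so that the only $\theta$-dependence that remains is through $\mu_\theta(s)$ in the transition kernel's action argument and through $V^\mu$ at the next state --- both of which are handled cleanly. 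One should also note that the interchange of $\nabla_\theta$ with the integral over $\mathcal{S}$ requires a mild regularity/dominated-convergence assumption (smoothness of $\mu_\theta$, $p$, and integrability of $\nabla_\theta V^\mu$), which is consistent with the boundedness assumption $\sup_{s} \|\nabla_\theta V^\mu(s)\| \le L_\theta$ invoked in the proof of \Cref{prop: stochastic policy gradient} and which will again be used when taking $l \to \infty$ to derive the deterministic policy gradient.
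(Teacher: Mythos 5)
Your proposal is correct and follows essentially the same route as the paper's proof: induction on $l$, with the base case obtained by differentiating the Bellman equation for $V^\mu$ and regrouping the $\nabla_a r$ and $\nabla_a p$ terms into $\nabla_a Q^\mu$, and the induction step by substituting the one-step identity and composing transition kernels. Your chain-rule presentation of the base case (splitting the total derivative of $Q^\mu(s,\mu_\theta(s))$ into an action-slot part and an explicit-$\theta$ part) is just a reorganization of the paper's expand-then-differentiate computation, and your remark that the clean formalization is to expand $Q^\mu$ via its Bellman equation before differentiating is exactly what the paper does.
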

\begin{proof}
The proof of the unrolling lemma for deterministic policies follows the same idea as for stochastic policies stated in \Cref{lem: unrolling lemma}. In analogy to \eqref{eq: def n-step transition probability}, let us define the $n$-step transition function $p_n^\mu(s', s)$ via
\begin{equation}
\int_{\Lambda} p_n^\mu(s', s) \mathrm d s' = \P_\mu \left(S_n \in \Lambda | S_0 = s \right). 
\end{equation}

As before, let us first consider $l=1$. Recalling the Bellman equation, $V^{\mu}(s) = r(s, \mu(s)) + \E_\mu\left[V^{\mu}(S_1) | S_0 = s \right]$, stated already in \eqref{eq: Bellman equation V deterministic policy}, we can compute
\begin{subequations}
\label{eq: deterministic policy gradient l=1}
\begin{align}
\label{eq: partial value function}
\nabla_\theta V^\mu(s)
&= \nabla_\theta \Bigl( r(s, \mu_\theta(s)) + \int_\mathcal{S} p(s', s, \mu_\theta(s)) V^\mu(s') \mathrm{d}s' \Bigr) \\
\begin{split}
&= \nabla_\theta \mu_\theta(s)^\top \nabla_a r(s, a) \Big |_{a=\mu_\theta(s)} \\ 
& \qquad\qquad +  \int_\mathcal{S} \left( p(s', s, \mu_\theta(s)) \nabla_\theta V^\mu(s') + \nabla_\theta \mu_\theta(s)^\top \nabla_a p(s', s, a) \Big|_{a=\mu_\theta(s)}  V^\mu(s') \right) \mathrm{d}s' 
\end{split} \\
\begin{split}
&= \nabla_\theta \mu_\theta(s)^\top \nabla_a \left( r(s, a) + \int_\mathcal{S} p(s', s, a) V^\mu(s') \mathrm{d}s' \right) \Big |_{a=\mu_\theta(s)} \\
& \qquad\qquad + \int_\mathcal{S} p(s', s, \mu_\theta(s)) \nabla_\theta V^\mu(s') \mathrm{d}s'
\end{split} \\
&= \nabla_\theta \mu_\theta(s)^\top \nabla_a  Q^\mu(s, a) \Big |_{a=\mu_\theta(s)} + \int_\mathcal{S} {p}_1^\mu(s', s) \nabla_\theta V^\mu(s') \mathrm{d}s'.
\end{align}
\end{subequations}

For notational convenience, let us define
\begin{equation}
\varphi_\theta(s) \coloneqq \nabla_\theta \mu_\theta(s)^\top \nabla_a Q^\mu(s, a)\Big|_{a=\mu_\theta(s)}.
\end{equation}

For the induction step, we can then compute
\begin{subequations}
\begin{align}
\nabla_\theta V^\mu(s)
&= \sum_{n=0}^{l-2} \int_\mathcal{S} {p}_n^\mu(s', s) \varphi_\theta(s') \mathrm{d}s' + \int_\mathcal{S} {p}_{l-1}^\mu(s', s) \left(\varphi_\theta(s') + \int_\mathcal{S} {p}_1^\mu(s'', s') \nabla_\theta V^\mu(s'') \mathrm{d}s'' \right) \mathrm{d} s' \\
&= \sum_{n=0}^{l-1} \int_\mathcal{S} {p}_n^\mu(s', s) \varphi_\theta(s') \mathrm{d}s' + \int_\mathcal{S} \int_\mathcal{S} {p}_{l-1}^\mu(s', s) p_1^\mu(s'', s') \nabla_\theta V^\mu(s'') \mathrm{d}s'' \mathrm{d}s' \\
&= \sum_{n=0}^{l-1} \int_\mathcal{S} {p}_n^\mu(s', s) \varphi_\theta(s') \mathrm{d}s' + \int_\mathcal{S} {p}_l^\mu(s', s) \nabla_\theta V^\mu(s') \mathrm{d} s',
\end{align}
\end{subequations}
where we used \eqref{eq: deterministic policy gradient l=1} in the first line.
\end{proof}

\begin{proof}[Proof of \Cref{prop: deterministic policy gradient}]
The proof is analog to the one of \Cref{prop: stochastic policy gradient}. First, we see that in the limit of $l \rightarrow \infty$, the second summand of \eqref{eq: unrolling lemma dpg} vanishes. We can then compute
\begin{subequations}
\begin{align}
\nabla_\theta J(\mu_\theta)
&= \int_\mathcal{S} \rho_0(s_0) \nabla_\theta V^\mu(s_0) \mathrm{d} s_0 \\
&= \int_\mathcal{S} \rho_0(s_0) \left( \sum_{n=0}^\infty \int_\mathcal{S} {p}_n^\mu(s,  s_0) \nabla_\theta \mu_\theta(s)^\top \nabla_a Q^\mu(s, a) \Big|_{a=\mu_\theta(s)} \mathrm{d} s \right) \mathrm{d} s_0 \\
&= \int_\mathcal{S} \eta^\mu(s) \nabla_\theta \mu_\theta(s)^\top \nabla_a Q^\mu(s, a) \Big|_{a=\mu_\theta(s)} \mathrm{d}s \\
\label{eq: deterministic policy gradient computation}
&= \E_{\mu}[N + 1] \, \E_{s \sim \rho^{\mu}} \left[ \nabla_\theta \mu_\theta(s)^\top \nabla_{a} Q^{\mu}(s,a) \Big|_{a=\mu_{\theta}(s)} \right],
\end{align}
\end{subequations}

where we have used
\begin{equation}
    \eta^\mu = \rho^\mu  \int_{\mathcal{S}} \eta^\mu(s) \mathrm ds =  \rho^\mu \, \E_{\mu}[N + 1]
\end{equation}
via definition \eqref{eq: definition state-space density} and \Cref{lem: eta region counting} (however, replacing the stochastic policy $\pi$ with the deterministic policy $\mu$). Finally, by choosing $r(s,a) = \nabla_\theta \mu_\theta(s)^\top \nabla_{a} Q^{\mu}(s,a)_{|a=\mu_{\theta}(s)}$ in the deterministic policy version of \Cref{prop: costs state-space perspective} and using identity \eqref{eq: deterministic policy gradient computation}, we readily get
\begin{equation}
    \E_{\mu} \left[ \sum_{n=0}^N \nabla_\theta \mu_\theta(S_n)^\top \nabla_{a} Q^{\mu_\theta}(S_n,a) \Big|_{a=\mu_{\theta}(S_n)} \right].
\end{equation}

\end{proof}

\begin{proof}[Proof of \Cref{cor: model-based deterministic policy gradient}]
Using the Bellman equation \eqref{eq: Bellman equation Q deterministic policy}, we can compute
\begin{subequations}
\label{eq: grad q-value wrt a}
\begin{align}
\nabla_a Q^\mu(s, a) 
&= \nabla_a r(s, a) +  \int_\mathcal{S} \nabla_a p(s', s, a) V^\mu(s') \mathrm{d}s' \\
&= \nabla_a r(s, a) +  \int_\mathcal{S} \nabla_a \log p(s', s, a) p(s', s, a) V^\mu(s') \mathrm{d}s' \\
&= \nabla_a r(s, a) +  \mathbb{E}_{s' \sim p(\cdot, s, a)} \left[ \nabla_a \log p(s', s, a) V^\mu(s')\right] .
\end{align}
\end{subequations}

Using \Cref{prop: deterministic policy gradient}, we get
\begin{subequations}
\begin{align}
    \nabla_\theta J(\mu_\theta) &= \E_{\mu} \left[ \sum_{n=0}^N \nabla_\theta \mu_\theta(S_n)^\top \nabla_{a} Q^{\mu_\theta}(S_n,a) \Big|_{a=\mu_{\theta}(S_n)} \right] \\
    &= \E_{\mu} \left[ \sum_{n=0}^N \nabla_\theta \mu_\theta(S_n)^\top \left( \nabla_a r(S_n, a) +  \mathbb{E}_{s' \sim p(\cdot, S_n, a)} \left[ \nabla_a \log p(s', S_n, a) V^\mu(s')\right] \right) \Big|_{a=\mu_{\theta}(S_n)}  \right] \\
    &= \E_{\mu} \left[ \sum_{n=0}^N \nabla_\theta \mu_\theta(S_n)^\top \left( \nabla_a r(S_n, a) +  \mathbb{E}_{\mu_\theta} \left[ \nabla_a \log p(S_{n+1}, S_n, a) V^\mu(S_{n+1})\right] \right) \Big|_{a=\mu_{\theta}(S_n)}  \right] \\
    &= \E_{\mu} \left[ \sum_{n=0}^N \nabla_\theta \mu_\theta(S_n)^\top \left( \nabla_a r(S_n, a) +   \nabla_a \log p(S_{n+1}, S_n, a) V^\mu(S_{n+1})\right) \Big|_{a=\mu_{\theta}(S_n)}  \right],
\end{align} 
\end{subequations}
where we used the tower property in the last line. The second equality follows from plugging in \eqref{eq: grad q-value wrt a} into the second equation of \Cref{prop: deterministic policy gradient}.
\end{proof}

\begin{remark}[Connection to stochastic optimal control gradient]
\label{rem: connection to stochastic optimal control}

From \Cref{cor: model-based deterministic policy gradient} we can recover the gradient estimator of stochastic optimal control problems with random stopping times, as for instance stated in Corollary 3.3 in \citet{ribera2024improving}. To this end, we realize that the continuous time SDE
\begin{equation}
    \mathrm d X_s^\mu = \left(b + \sigma \mu \right)(X_s^\mu)\, \mathrm dt + \sigma(X_s^\mu)\, \mathrm d W_s,
\end{equation}
where $W$ is standard Brownian motion, $b : \R^d \to \R^d$ an arbitrary drift and $\sigma : \R^{d} \to \R^{d \times d}$ is the diffusion matrix, can be discretized via the Euler-Maruyama scheme
\begin{equation}
    S_{n+1} = S_n + \left(b + \sigma \mu\right)(S_n)\Delta t + \sigma(S_n)\xi_{n+1} \sqrt{\Delta t},
\end{equation}
with step size $\Delta t > 0$ and Gaussian increment $\xi_{n+1} \sim \mathcal{N}(0, \operatorname{Id})$. Calling the terminal time $T=N\Delta t$, the continuous control costs
\begin{equation}
    J(\mu) = \E\left[\int_0^T \left( f + \frac{1}{2}\|\mu\|^2 \right)(X_s^\mu) \, \mathrm dt + g(X_T^\mu) \ \right]
\end{equation}
correspond to $r(s, a) = -\left(f(s) + \frac{1}{2}\| a\|^2\right) \Delta t \, \mathbbm{1}_{\mathcal{T}^c}(s)  - g(s) \, \mathbbm{1}_{\mathcal{T}}(s)$ in the discrete setting \eqref{eq: expected return} and we can thus compute
\begin{align}
\nabla_a r(s, a) &= - a \Delta t \, \mathbbm{1}_{\mathcal{T}^c}(s), \\
\nabla_a \log p(S_{n+1}, S_n, a) &= \sigma^{-1}(S_n)\left(S_{n+1} - \left( S_n + \left(b + \sigma a\right)(S_n)\Delta t \right) \right) = \xi_{n+1} \sqrt{\Delta t}.
\end{align}
We can therefore see that \eqref{eq: model-based deterministic policy gradient trajectory} from \Cref{cor: model-based deterministic policy gradient} corresponds to the time-discretized version of  Corollary 3.3 in \citet{ribera2024improving}. We highlight that a derivation in discrete time including random stopping times has not been rigorously done before (cf. \citet{hartmann2012efficient}, where it has been conjectured that the formula is inexact due to the random time).
\end{remark}

\begin{corollary}[Alternative trajectory-based versions of the model-based policy gradient for deterministic policies]
\label{cor: alternative trajectory dpg}
For the gradient of the expected return \eqref{eq: expected return} it holds
\begin{align}
\label{eq: alternative trajectory dpg m=0}
\nabla_\theta J(\mu_\theta) 
&= \E_{\mu} \Bigg[\sum_{n=0}^{N} \nabla_\theta \mu_\theta(S_n)^\top  \Big( \nabla_a r(S_n, a)
+  \sum_{m=0}^N r(S_m, \mu_\theta(S_m)) \nabla_a \log p(S_{n+1}, S_n, a) \Big)\Big|_{a=\mu_\theta(S_n)} \Bigg] \\ 
\label{eq: alternative trajectory dpg m=n}
&= \E_{\mu} \Bigg[\sum_{n=0}^{N} \nabla_\theta \mu_\theta(S_n)^\top  \Big( \nabla_a r(S_n, a)
+  \sum_{m=n+1}^N r(S_m, \mu_\theta(S_m)) \nabla_a \log p(S_{n+1}, S_n, a) \Big)\Big|_{a=\mu_\theta(S_n)} \Bigg] \\  
\label{eq: alternative trajectory dpg baseline} 
& = \E_{\mu} \left[ \sum_{n=0}^N \nabla_\theta \mu_\theta(S_n)^\top \nabla_{a} \Big(Q^{\mu_\theta}(S_n,a) \Big|_{a=\mu_{\theta}(S_n)} - b(S_n)\Big)\right].
\end{align} 
where $b : \mathcal{S} \to \R$ is an arbitrary function (sometimes called \emph{baseline}).
\end{corollary}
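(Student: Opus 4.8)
The plan is to derive all three identities from the model-based formula \eqref{eq: model-based deterministic policy gradient trajectory} established in \Cref{cor: model-based deterministic policy gradient}, by rewriting the term $V^\mu(S_{n+1})$ that appears there and, for the baseline version, by noting that a state-dependent baseline has vanishing action-gradient. The two tools are: (i) the tower and Markov properties, used exactly as in the proof of \Cref{cor: alternative trajectory policy gradient}; and (ii) the elementary identity $\int_\S \nabla_a p(s', s, a)\,\mathrm d s' = \nabla_a \int_\S p(s', s, a)\,\mathrm d s' = \nabla_a 1 = 0$, valid for all $(s,a)$ (with the usual regularity allowing differentiation under the integral, as already used in \Cref{cor: model-based deterministic policy gradient}).

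First I would prove \eqref{eq: alternative trajectory dpg m=n}. Fix $n$; by the definition of the value function and time-homogeneity of the Markov chain, $V^\mu(S_{n+1}) = \E_\mu\!\bigl[\sum_{m=n+1}^N r(S_m, \mu_\theta(S_m)) \,\big|\, \mathcal{F}_{n+1}\bigr]$. Since $\nabla_\theta \mu_\theta(S_n)^\top \nabla_a \log p(S_{n+1}, S_n, a)\big|_{a=\mu_\theta(S_n)}$ is $\mathcal{F}_{n+1}$-measurable, it may be moved inside this conditional expectation, and the tower property (followed by summing over $n$) then replaces $V^\mu(S_{n+1})$ by the return-to-go $\sum_{m=n+1}^N r(S_m, \mu_\theta(S_m))$ inside $\E_\mu[\cdot]$, while the $\nabla_a r(S_n,a)$ term stays untouched; this is \eqref{eq: alternative trajectory dpg m=n}. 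As in \Cref{cor: alternative trajectory policy gradient}, the only subtlety is that the number of summands $N-n$ is itself random; since $N < \infty$ almost surely, the conditional-expectation identity and the interchange of the $n$-sum with the outer expectation are justified by the same dominated-convergence reasoning, and the $n=N$ boundary term is handled by the same convention as in \Cref{cor: model-based deterministic policy gradient}.

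Next I would upgrade \eqref{eq: alternative trajectory dpg m=n} to \eqref{eq: alternative trajectory dpg m=0} by showing that extending the inner sum from $\sum_{m=n+1}^N$ to $\sum_{m=0}^N$ adds nothing: the extra indices satisfy $m \le n$, so $r(S_m, \mu_\theta(S_m))$ is $\mathcal{F}_n$-measurable, and conditioning on $\mathcal{F}_n$ gives
\begin{align*}
&\E_\mu\!\left[ \nabla_\theta \mu_\theta(S_n)^\top \nabla_a \log p(S_{n+1}, S_n, a)\big|_{a=\mu_\theta(S_n)}\, r(S_m, \mu_\theta(S_m)) \,\Big|\, \mathcal{F}_n \right] \\
&\qquad = r(S_m, \mu_\theta(S_m))\, \nabla_\theta \mu_\theta(S_n)^\top \int_\S \nabla_a p(s', S_n, a)\big|_{a=\mu_\theta(S_n)}\, \mathrm d s' = 0,
\end{align*}
using $p\,\nabla_a \log p = \nabla_a p$ together with tool (ii). Summing these vanishing contributions over $n$ and over $m \le n$ and applying the tower property yields \eqref{eq: alternative trajectory dpg m=0}.

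Finally, \eqref{eq: alternative trajectory dpg baseline} is immediate: since $b : \S \to \R$ does not depend on the action, $\nabla_a b(S_n) = 0$, hence $\nabla_a\bigl(Q^{\mu_\theta}(S_n, a) - b(S_n)\bigr)\big|_{a=\mu_\theta(S_n)} = \nabla_a Q^{\mu_\theta}(S_n, a)\big|_{a=\mu_\theta(S_n)}$, so the statement reduces to \eqref{eq: deterministic policy gradient trajectory perspective} of \Cref{prop: deterministic policy gradient}; in contrast to the stochastic case in \Cref{cor: alternative trajectory policy gradient}, no subtraction of an expectation is needed here, precisely because both $V^\mu$ and any state-dependent baseline have zero action-gradient. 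I expect the main obstacle to be the bookkeeping around the random horizon $N$ in the first two steps — making the ``$V^\mu$-as-conditional-expectation'' identity and the sum/expectation interchange rigorous when the number of terms is itself random — but this is the same score-function/tower-property machinery already used for stochastic policies, so once those measurability and integrability points are settled the rest is routine.
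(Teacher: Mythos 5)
Your proposal is correct and follows essentially the same route as the paper: starting from the model-based formula \eqref{eq: model-based deterministic policy gradient trajectory}, you replace $V^\mu(S_{n+1})$ by the return-to-go via the tower/Markov properties to get \eqref{eq: alternative trajectory dpg m=n}, extend to \eqref{eq: alternative trajectory dpg m=0} using $\int_\S \nabla_a p(s',s,a)\,\mathrm d s' = 0$ for the $m \le n$ terms, and obtain \eqref{eq: alternative trajectory dpg baseline} from the vanishing action-gradient of a state-dependent baseline. The only cosmetic difference is that the paper routes the baseline identity through the state-space representation and back, whereas you observe directly that $\nabla_a b(S_n)=0$, which is equivalent and slightly more direct.
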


\begin{proof}[Proof of \Cref{cor: alternative trajectory dpg}]
The proof of the alternative versions of the deterministic policy gradient follows the same idea as for stochastic policies stated in \Cref{cor: alternative trajectory policy gradient}. 

We first note that for an arbitrary function $\varphi : \S^{n+2} \to \R$ it holds
\begin{subequations}
\begin{align}
\E_\mu\left[\varphi(S_0, \dots, S_{n+1}) V^\mu(S_{n+1}) \right] 
&= \E_\mu\left[\varphi(S_0, \dots, S_{n+1}) \E_\mu \left[ \sum_{m=n+1}^N r(S_m, \mu(S_m)) \Bigg| S_n, A_n  \right] \right] \\ 
&= \E_\mu \left[\E_\mu \left[\varphi(S_0, \dots, S_{n+1}) \sum_{m=n+1}^N r(S_m, \mu(S_m)) \right] \right] \\
&= \E_\mu \left[\varphi(S_0, \dots, S_{n+1}) \sum_{m=n+1}^N r(S_m, \mu(S_m)) \right],
\end{align}
\end{subequations}
where we used the tower property, the fact that $\varphi(S_0, \dots, S_{n+1})$ is $\mathcal{F}_{n+1}$-measurable and the Markov property. Equality \eqref{eq: alternative trajectory dpg m=n} now follows from choosing $\varphi(S_0, \dots, S_{n+1}) = \nabla_\theta \mu_\theta (S_n)^\top \nabla_a \log p(S_{n+1}, S_n, a) \Big)\Big|_{a=\mu_\theta(S_n)}$.
For this choice we can even show that
\begin{equation}
\begin{split}
& \E_\mu \left[\nabla_\theta \mu_\theta (S_n)^\top V^\pi(S_{n+1}) \nabla_a \log p(S_{n+1}, S_n, a) \big|_{a=\mu_\theta(S_n)} \right] \\
& \qquad = \E_\mu \left[ \nabla_\theta \mu_\theta (S_n)^\top \sum_{m=0}^N r(S_m, \mu_\theta(S_m)) \nabla_a \log p(S_{n+1}, S_n, a) \big|_{a=\mu_\theta(S_n)}\right],
\end{split}
\end{equation}
which implies  \eqref{eq: alternative trajectory dpg m=0}. This can be seen by using the $n$-step transition function $p_n^\mu(s', s, \mu(s))$, which is stated in \Cref{cor: alternative trajectory policy gradient} for stochastic policies, here denoted by $p_n^\mu(s', s)$,
as well as by noting that for $m < n+1$ it holds
\begin{subequations}
\begin{align}
& \E_\mu \left[ \nabla_\theta \mu_\theta (S_n)^\top r(S_m, \mu_\theta(S_m)) \nabla_a \log p(S_{n+1}, S_n, a) \big|_{a=\mu_\theta(S_n)} \right] \\
& =  \int_\S \int_\S \int_\S \rho_m (s_m) r(s_m, \mu(s_m)) p_{n-m}^\mu(s_n, s_m) \nabla_\theta \mu(s_n)^\top p_1^\mu(s_{n+1}, s_n) \nabla_a \log p(s_{n+1}, s_n, a) \big|_{a=\mu(s_n)} \mathrm d s_m \mathrm d s_n \mathrm d s_{n+1} \\
& =  \int_\S \int_\S \rho_m (s_m) r(s_m, \mu(s_m)) p_{n-m}^\mu(s_n, s_m) \nabla_\theta \mu(s_n)^\top \left(\int_\S p_1^\mu(s_{n+1}, s_n) \nabla_a \log p(s_{n+1}, s_n, a) \big|_{a=\mu(s_n)} \mathrm d s_{n+1} \right) \mathrm d s_m \mathrm d s_n \\
&  = 0, \nonumber
\end{align}
\end{subequations}
since for any tuple $(s', s, a) \in \S \times \S \times \A$ it holds that
\begin{equation}
\int_\S  \nabla_a \log p (s', s, a) p(s', s, a)  \mathrm d s' = \int_\S \nabla_a  p(s', s, a) \mathrm d s'  = \nabla_a \int_\S  p(s', s, a) \mathrm d s'  = 0.
\end{equation}

We can readily prove \eqref{eq: alternative trajectory dpg baseline} in the state-space perspective by noting that
\begin{equation}
\nabla_\theta J(\mu_\theta)
= \int_\mathcal{S} \eta^\mu(s) \nabla_\theta \mu_\theta(s)^\top \nabla_a Q^\pi(s, a) \big|_{a=\mu_\theta(s)} \mathrm{d}s
= \int_\mathcal{S} \eta^\mu(s) \nabla_\theta \mu_\theta(s)^\top \nabla_a  (Q^\pi(s, a) - b(s)) \big|_{a=\mu_\theta(s)} \mathrm{d}s .
\end{equation}
By choosing $r(s, a) = r(s, \mu_\theta(s)) = \nabla_\theta \mu_\theta(s) \nabla_a (Q^\mu(s, a) - b(s))\big|_{a=\mu_\theta(s)}$ in \Cref{prop: costs state-space perspective} we arrive at \eqref{eq: alternative trajectory dpg baseline} in the trajectory-based perspective.
\end{proof}

\section{Computational details}
\label{app: computational details}

In this section we provide computational details that are necessary for the numerical approximation of the gradients discussed in the main part.

For policy gradients of stochastic policies we consider non-actor-critic approaches where the $Q$-value function is estimated, but not learned. The alternative objectives corresponding to the trajectory PG stated in \eqref{eq: alternative trajectory policy gradient m=0} and to the state-space PG stated in \eqref{eq: policy gradient state-space perspective} are given by
\begin{align}
\label{eq: alternative objective trajectory stochastic policy}
J_\mathrm{eff}^\mathrm{traj}(\pi_\theta, \pi_\vartheta) &:= \E_{\pi_\vartheta} \left[\sum_{n=0}^N \log \pi_\theta(S_n, A_n) \sum_{m=0}^{N} r(S_m, A_m) \right], \\
\label{eq: alternative objective state stochastic policy}
J_\mathrm{eff}^\mathrm{state}(\pi_\theta, \pi_\vartheta) &:= \E_{\pi_\vartheta} \left[ N + 1 \right] \E_{\substack{s \sim \rho^{\pi_\vartheta} \\ a \sim \pi_\vartheta(s, \cdot)}} \left[ \log \pi_\theta(s, a) Q^{\pi_\vartheta}(s, a) \right].
\end{align}
We refer to \Cref{alg: alg1,alg: alg2} for implementational details.

For policy gradients of deterministic policies we consider non-actor-critic approaches as well. Recall that this is possible due to the model-based formulas provided in \eqref{eq: model-based deterministic policy gradient trajectory} and \eqref{eq: alternative trajectory dpg m=0}. Moreover, we have assumed that the state-action probability density is Gaussian and is motivated by a controlled SDE (see \Cref{rem: connection to stochastic optimal control}). The corresponding alternative objectives are given by
\begin{align}
\label{eq: alternative objective trajectory deterministic policy}
J_\mathrm{eff}^\mathrm{traj}(\mu_\theta, \mu_\vartheta)
&:= \E_{\mu_\vartheta} \Bigg[\sum_{n=0}^{N} \mu_\theta(S_n)^\top \Big( \nabla_a r(S_n, a)\big|_{a=\mu_\vartheta(S_n)} +  \sum_{m=0}^N r(S_m, \mu_\vartheta(S_m)) \xi_{n+1} \sqrt{\Delta t} \Big) \Bigg], \\
\label{eq: alternative objective state deterministic policy}
J_\mathrm{eff}^\mathrm{state}(\mu_\theta, \mu_\vartheta) 
&:= \E_{\mu_\vartheta} \left[ N + 1 \right] \, \mathbb{E}_{\substack{s \sim \rho^{\mu_\vartheta}, \\ s' \sim p^{\mu_\vartheta}(\cdot, s)}} \left[ \mu_\theta(s)^\top \Bigl( \nabla_a r(s, a)\big|_{a=\mu_\vartheta(s)} + V^{\mu_\vartheta}(s') \xi_{n+1} \sqrt{\Delta t}  \Bigr) \right].
\end{align}
We refer to \Cref{alg: alg3,alg: alg4} for implementational details and note that algorithm \Cref{alg: alg3} is already sketched in \citet{quer_connecting_2024}.

\begin{algorithm}
\caption{Trajectory Policy Gradient (REINFORCE with random time horizon).} \label{alg: alg1}
\begin{algorithmic} [1]
\State Initialize stochastic policy $\pi_\theta$ with random parameters.
\State Choose a gradient based optimization algorithm, a learning rate $\lambda$, a sample size $K$, and a stopping criterion.
\Repeat
\State Simulate $K$ samples of trajectories following the policy $\pi_\theta$, where the $k$-th trajectory has runtime $N^{(k)}$.
\State Estimate alternative objective stated in \eqref{eq: alternative objective trajectory stochastic policy} and compute the gradient via automatic differentiation
\[
\widehat{J}_\mathrm{eff}^\mathrm{traj}
= \frac{1}{K}\sum_{k=1}^K \sum_{n=0}^{N^{(k)}} \log \pi_\theta(S_n^{(k)}, A_n^{(k)}) \, \sum_{m=0}^{N^{(k)}} r(S_m^{(k)}, A_m^{(k)}).
\]
\State Update policy network parameters based on the optimization algorithm.
\Until{stopping criterion is fulfilled.}
\end{algorithmic}
\end{algorithm}

\begin{algorithm}
\caption{State-space Policy Gradient.} \label{alg: alg2}
\begin{algorithmic} [1]
\State Initialize stochastic policy $\pi_\theta$ with random parameters.
\State Choose a gradient based optimization algorithm, a learning rate $\lambda$, a sample size $K$ for the trajectories, a sample size $M$ for the experiences and a stopping criterion.
\Repeat
\State Simulate $K$ samples of trajectories following the policy $\pi_\theta$, where the $k$-th trajectory has runtime $N^{(k)}$.
\State Compute estimate $\widehat{Z}^\pi = \frac{1}{K}\sum_{k=1}^{K} N^{(k)} + 1 $.
\For{each time step $n$}
\State Compute return
$G_n^{(k)} = \sum_{m=n}^{N^{(k)}} r(S_m^{(k)}, A_m^{(k)})$ and store the tuple $(S_n^{(k)}, A_n^{(k)}, G_n^{(k)})$ in memory.
\EndFor
\State Sample $M$ experiences from memory $\{\widetilde{S}^{(m)}, \widetilde{A}^{(m)}, \widetilde{G}^{(m)} \}_{m=1}^M$, where $\widetilde{S}^{(m)}, \widetilde{A}^{(m)}$ are unbiased samples from $\rho^\pi$ and $\pi(\widetilde{S}^{(m)}, \cdot)$, respectively, and $\widetilde{G}^{(m)}$ estimates $Q^{\pi}(\widetilde{S}^{(m)}, \widetilde{A}^{(m)})$.  
\State Estimate alternative objective stated in \eqref{eq: alternative objective state stochastic policy} and compute the gradient via automatic differentiation
\[
\widehat{J}_\mathrm{eff}^\mathrm{state}
= \widehat{Z}^\pi \frac{1}{M}\sum_{m=1}^{M} \log \pi_\theta(\widetilde{S}^{(m)}, \widetilde{A}^{(m)}) \widetilde{G}^{(m)} .
\]
\State Update policy network parameters based on the optimization algorithm.
\State Empty memory.
\Until{stopping criterion is fulfilled.}
\end{algorithmic}
\end{algorithm}

\begin{algorithm}
\caption{Trajectory and model-based Deterministic Policy Gradient.}
\label{alg: alg3}
\begin{algorithmic} [1]
\State Initialize deterministic policy $\mu_\theta$ with random parameters.
\State Choose a gradient based optimization algorithm, a learning rate $\lambda$, a sample size $K$, and a stopping criterion.
\Repeat
\State Simulate $K$ samples of trajectories following the policy $\mu_\theta$.
\State Estimate alternative objective stated in \eqref{eq: alternative objective trajectory deterministic policy} and compute the gradient via automatic differentiation
\begin{equation*}
\widehat{J}_\mathrm{eff}^\mathrm{traj}
= \frac{1}{K}\sum_{k=1}^K  \Biggl( \sum_{n=0}^{N^{(k)}} \mu_\theta(S_n^{(k)})^\top \Biggl(\nabla_a r(S_n^{(k)}, a)\big|_{a=\mu_\vartheta(S_n^{(k)})} + \Biggl(\sum_{m=0}^{N^{(k)}} r(S_m^{(k)}, \mu_\vartheta(S_m^{(k)})) \Biggr) \xi_{n+1}^{(k)} \sqrt{\Delta t} \Biggr) \Biggr).
\end{equation*}
\State Update the parameters $\theta$ based on the optimization algorithm.
\Until{stopping criterion is fulfilled.}
\end{algorithmic}
\end{algorithm}

\begin{algorithm}
\caption{State-space and model-based Deterministic Policy Gradient.} \label{alg: alg4}
\begin{algorithmic} [1]
\State Initialize deterministic policy $\mu_\theta$ with random parameters.
\State Choose a gradient based optimization algorithm, a learning rate $\lambda$, a sample size $K$ for the trajectories, a sample size $M$ for the experiences and a stopping criterion.
\Repeat
\State Simulate $K$ trajectories following the policy $\mu_\theta$ and store them in memory.
\State Compute estimate $\widehat{Z}^\mu = \frac{1}{K}\sum_{k=1}^{K} N^{(k)} + 1 $.
\For{each time step $n$}
\State Compute return 
$G_{n+1}^{(k)} = \sum_{m=n+1}^N r(S_m^{(k)}, \mu_\theta(S_m^{(k)}))$ and store the tuple $(S_n^{(k)}, S_{n+1}^{(k)}, \xi_{n+1}^{(k)}, G_{n+1}^{(k)})$ in memory.
\EndFor
\State Sample $M$ experiences from memory $\{\widetilde{S}^{(m)}, \widetilde{S}'^{(m)} \widetilde{\xi}^{(m)}, \widetilde{G}^{(m)}\}_{m=1}^M$ where $\widetilde{S}^{(m)}$ is an unbiased sample from $\rho^\mu$, $\widetilde{\xi}^{(m)} \sim \mathcal{N}(0, \operatorname{Id})$, and $\widetilde{G}^{(m)}$ estimates $V^{\mu}(\widetilde{S}'^{(m)})$.
\State Estimate alternative objective $J_\mathrm{eff}$ given in \eqref{eq: alternative objective state deterministic policy} and compute the gradient via automatic differentiation
\[
\widehat{J}_\mathrm{eff}^\mathrm{state}
= \widehat{Z}^\mu  \frac{1}{M}\sum_{m=1}^M  \mu_\theta(\widetilde{S}^{(m)})^\top \Biggl(\nabla_a r(\widetilde{S}^{(m)}, a)\big|_{a=\mu_\vartheta(\widetilde{S}^{(m)})} + \widetilde{G}^{(m)} \widetilde{\xi}^{(m)} \sqrt{\Delta t} \Biggr)
\]
\State Update policy network parameters based on the optimization algorithm.
\State Empty memory.
\Until{stopping criterion is fulfilled.}
\end{algorithmic}
\end{algorithm}

\section{Experimental details and additional experiments}

In this section we provide further details on the numerical examples presented in \Cref{sec: numerical experiments}.

\subsection{Architecture of the neural networks}
\label{app: neural network architectures}
Let $d_\mathrm{in}, d_\mathrm{out} \in \mathbb{N}^+$ be the input and output dimensions of the feed-forward network $\varphi_\theta \colon \mathbb{R}^{d_\mathrm{in}} \rightarrow \mathbb{R}^{d_\mathrm{out}}$ defined by
\begin{equation}
\label{eq: feed-foreward nn}
\varphi_\theta(x) = \rho_\mathrm{out}(A_L \rho(A_{L-1} \rho(\cdots  \rho(A_1 x + b_ 1) \cdots) + b_{L-1}) + b_L),
\end{equation}
where $L$ is the number of layers $d_0 = d_\mathrm{in}, d_L = d_\mathrm{out}$, $A_l \in \mathbb{R}^{d_{l} \times d_{l-1}}$ and $b_l \in \mathbb{R}^{d_l}, 1 \le l \le L$ are the weights and the bias vectors for each layer and $\rho, \rho_\mathrm{out}: \mathbb{R} \to \mathbb{R}$ are the inner and outer nonlinear activation functions applied componentwise. The collection of matrices $A_l$ and vectors $b_l$ contains the learnable parameters $\theta \in \mathbb{R}^p$. For all the experiments we choose the inner and outer activation functions $\rho = \tanh$ and $\rho_\mathrm{out} = \operatorname{Id}$ if not otherwise stated.

For the stochastic policy experiments described in \Cref{sec: mountain car,sec: reacher} we consider a Gaussian stochastic policy $\pi(s, \cdot) = \mathcal{N} (\mu(s), \sigma^2(s) \operatorname{Id})$, for which $\mu$ and $\sigma$ are learnable functions represented by two $L=3$ layer feed-forward neural networks sharing the first $L=2$ layers (a so-called two-head neural network). To guarantee that $\sigma^2 \operatorname{Id}$ is positive definite, the final activation function of the corresponding neural network is chosen as $\rho_\mathrm{out}^{\sigma} = x + \sqrt{x^2 + 1}$. 

To ensure that the initial output of the networks is close to zero, the final layer weights and biases are initialized by sampling from the uniform distribution $\mathcal{U}(-5 \times 10^{-3}, 5 \times 10^{-3})$. We also note that each experiment requires only one CPU core, and the maximum value of allocated memory is set to 64 GB.

\subsection{Modified continuous mountain car problem}
\label{app: mountain car details}

For the experiment described in \Cref{sec: mountain car} we consider a Gaussian stochastic policy for which $\mu$ and $\sigma$ are represented by a two-head neural network (see details in \Cref{app: neural network architectures}) with $L=3$ layers and $d_1 = d_2 = 32$ units. We compare the three different policy gradient formulas by implementing \Cref{alg: alg1} (\emph{trajectory PG}), \Cref{alg: alg2} (\emph{state-space PG}) and \Cref{alg: alg2} without estimating the $Z^\pi$-factor (\emph{state-space PG unbiased}) for a batch of $K=100$ trajectories, a batch of experiences containing all the information in the memory ($M= 100 \%$ of the memory size), and we stop the optimization algorithm after $I = 5 \times 10^4$ gradient iterations. The best performing learning rates for each gradient approach are $\lambda_\mathrm{traj} = \lambda_\mathrm{state} = 10^{-4}$, $\lambda_\mathrm{state}^\mathrm{biased} = 5 \times 10^{-2}$, respectively. Note that for $\lambda_\mathrm{state}^\mathrm{biased} \geq 10^{-1}$ the optimization algorithm fails due to high instabilities which lead to long trajectories and hence exceed the allowed allocated memory.  

\subsection{Two-joint robot arm (reacher)}
\label{app: reacher details}
For the experiment described in \Cref{sec: reacher} we consider a Gaussian stochastic policy with $L=3$ layers and $d_1 = d_2 = 32$ units and compare the performance of \Cref{alg: alg1} (\emph{trajectory PG}), \Cref{alg: alg2} (\emph{state-space PG}), and \Cref{alg: alg2} without estimating the $Z^\pi$-factor (\emph{state-space PG unbiased}) for $K=100$ trajectories, $M= 100 \%$ of experiences in memory, and $I = 10^4$ gradient iterations. The best performing learning rates for each gradient approach are $\lambda_\mathrm{traj} = \lambda_\mathrm{state} = 5 \times 10^{-4}$, $\lambda_\mathrm{state}^\mathrm{biased} = 5 \times 10^{-2}$, respectively.

\subsection{Importance sampling of hitting times in molecular dynamics}
For the experiment described in \Cref{sec: double well numerics} we consider a deterministic policy represented by a neural network  with $L=2$ layers and $d_1 = 32$. We compare the three different model-based policy gradient formulas for deterministic policies by implementing \Cref{alg: alg3} (\emph{trajectory DPG}), \Cref{alg: alg4} (\emph{state-space DPG}) and \Cref{alg: alg4} without estimating the $Z^\mu$-factor (\emph{state-space DPG unbiased}) for $K=500$ trajectories, $M= 100 \%$ of experiences in memory, and $I = 5 \times 10^4$ gradient iterations. The best performing learning rates for each gradient approach are $\lambda_\mathrm{traj} = \lambda_\mathrm{state} = 2 \times 10^{-3}$, $\lambda_\mathrm{state}^\mathrm{biased} = 5 \times 10^{-1}$, respectively.
\end{document}